\newlength{\defbaselineskip}
\newcommand{\setlinespacing}[1]%
           {\setlength{\baselineskip}{#1 \defbaselineskip}}
\newcommand{\actaqed}{\hfill $\actabox$}
{\medskip\noindent \textit{Proof of #1. }}%
{\actaqed \medskip}
\def\D{{\mathcal D}}
\def\M{{\mathcal M}}
\def \<{\langle}
\def\>{\rangle}
\def \e{\epsilon}
\def \de{\delta}
\def \ff{\varphi}
\def \sp{\operatorname{span}}
\def\bt{\beta}
\def\la{\lambda}
\def\a{\alpha}
\newtheorem{Theorem}{Theorem}[section]
\newtheorem{Lemma}{Lemma}[section]
\newtheorem{Proposition}{Proposition}[section]
\newtheorem{Remark}{Remark}[section]
\newtheorem{Corollary}{Corollary}[section]
\numberwithin{equation}{section}
\begin{document}
\title{{Convergence and rate of convergence of some greedy algorithms in convex optimization} }
\author{ V.N. Temlyakov \thanks{ University of South Carolina and Steklov Institute of Mathematics. Research was supported by NSF grant DMS-1160841 }} 
\maketitle
\begin{abstract}
{The paper gives a systematic study of the approximate versions of three greedy-type algorithms that are widely used in convex optimization. By approximate version we mean the one where some of evaluations are made with an error. Importance of such versions of greedy-type algorithms in convex optimization and in approximation theory was emphasized in previous literature.   }
\end{abstract}

\section{Introduction}

 We
  study approximate solutions of the optimization problem
\begin{equation}\label{1.1}
\inf_{x\in S} E(x)
\end{equation}
under certain assumptions on $E$ and $S$. In this paper we always assume that $E$ is a convex function defined on a Banach space $X$. We would like to construct an algorithm that after $m$ iterations provides a point $x_m$ such that $E(x_m)$ is close to the $\inf_{x\in S} E(x)$. There is an increasing interest in building such sparse approximate solutions using different greedy-type algorithms (see, for instance, \cite{Z}, \cite{SSZ}, \cite{CRPW}, \cite{Cl}, \cite{Ja1}, \cite{DHM}, \cite {JS}, \cite{TRD}, and \cite{Ja2}). This paper is a follow up to the papers \cite{Z}, \cite{T1}, \cite{T2}, and \cite{DT}. 

Three greedy-type algorithms and their approximate versions designed for convex optimization are discussed. We study here the Weak Relaxed Greedy Algorithm (WRGA(co)), the Relaxed $E$-Greedy Algorithm (REGA(co)), and the Weak Greedy Algorithm with Free Relaxation \newline (WGAFR(co)). The used above names of these algorithms are from approximation theory. The WRGA(co) is the approximation theory analog of the classical Frank-Wolfe Algorithm, 
introduced in \cite{FW} and studied in many papers (see, for instance, \cite{DR}, \cite{DH}, \cite{Cl}, \cite{Ja1}, \cite{Ja2}, \cite{DHM}). This algorithm was rediscovered in 
statistics and approximation theory in \cite{B} and \cite{Jo} (see \cite{Tbook} for further discussion). The REGA(co) was introduced in \cite{Z} under the name Sequential greedy approximation. The WGAFR(co) was studied in \cite{T1}. 

The novelty of this paper is in a systematic study of the approximate versions of the above three greedy-type algorithms. By approximate version we mean the one where some of evaluations are made with an error. Importance of such versions of greedy-type algorithms in convex optimization and in approximation theory was emphasized respectively in \cite{Z}, \cite{DT} and in \cite{T7}. We now proceed to a detailed discussion of our results. 

Let $X$ be a Banach space.
A system $\D:=\{g\}$, $g\in X$, is called a {\it symmetric dictionary} if $\|g\|:=\|g\|_X=1$, $g\in \D$ implies $-g\in \D$ for all $g\in \D$
and the closure of $\sp(\D)$ coincides with $X$. 
We impose some conditions on the minimizer $x^*$ of the problem (\ref{1.1}). In some theorems 
we assume that the point $x^*$ of the minimum of $E(x)$ belongs to the closure of the convex hull of a given symmetric dictionary $\D$, which is denoted by $A_1(\D)$. In other words this assumption means $\inf_{x\in S}E(x) = \inf_{x\in A_1(\D)}E(x)$. Also, we might be interested in finding $\inf_{x\in A_1(\D)}E(x)$ without the above assumption. In both cases we can set $S=A_1(\D)$. We use this assumption to design different greedy-type algorithms which build at the $m$th iteration an approximant of $x^*$ as an $m$-term polynomial with respect to $\D$. Some of these algorithms alike gradient methods use the derivative $E'(x)$. Other algorithms use only function $E$ values. We discuss these algorithms in Section 2 where we present some known convergence and rate of convergence results. The algorithms in Section 2 are assumed to perform exact evaluations of $E(x)$ (or $E'(x)$). Here is a typical algorithm of this family. The following algorithm was introduced in \cite{Z} under the name Sequential greedy approximation. As in \cite{T1} and \cite{T2} we use the names of algorithms from approximation theory. 

{\bf Relaxed $E$-Greedy Algorithm (REGA(co)).} 
We define   $G_0 := 0$. Then, for each $m\ge 1$ we have the following inductive definition.

(1) $\varphi_m   \in \D$ is any element and $0\le \lambda_m \le 1$ is a number satisfying  (assuming existence)
$$
E((1-\la_m)G_{m-1} + \la_m\varphi_m) = \inf_{0\le \la\le 1;g\in\D}E((1-\la)G_{m-1} + \la g)
$$
and define
$$
G_m:=  (1-\la_m)G_{m-1} + \la_m\varphi_m.
$$

In our algorithms we always begin with 
$G_0=0$ and build approximants $G_1,G_2,\dots$. A natural for us domain 
for minimization problem (\ref{1.1}) is
$$
D:=\{x:E(x)\le E(0)\}.
$$
We assume that $D$ is bounded.
The following convergence and rate of convergence results for the REGA(co) were established in \cite{DT}.

\begin{Theorem}\label{T1.1} Let $E$ be a uniformly smooth on $D\cap A_1(\D)$ convex function.   Then, for the REGA(co) we have 
$$
\lim_{m\to \infty} E(G_m) =\inf_{x\in A_1(\D)}E(x).
$$
\end{Theorem}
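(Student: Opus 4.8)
\medskip\noindent\textbf{Proof strategy.}
The plan is to combine two facts: $(E(G_m))_{m\ge0}$ is non-increasing, and there is a one-step decay estimate (coming from convexity plus uniform smoothness) which together force $\lim_m E(G_m)$ down to $f^{*}:=\inf_{x\in A_1(\D)}E(x)$. First I would record the structural facts. Since $\D$ is symmetric, $0=\frac12 g+\frac12(-g)\in A_1(\D)$, so $G_0=0\in D\cap A_1(\D)$; since $\lambda=0$ is admissible in step (1), $E(G_m)\le E(G_{m-1})$; and since $G_m$ is a convex combination of points of $A_1(\D)$, induction gives $G_m\in D\cap A_1(\D)$ for every $m$. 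Hence $E(G_m)$ decreases to some limit $L\ge f^{*}$, and it remains to prove $L=f^{*}$. I would also use that uniform smoothness makes $E$ Fr\'echet differentiable and continuous on $D\cap A_1(\D)$, that $\rho(E,u)$ is non-decreasing with $\rho(E,u)=o(u)$ as $u\to0$, the subgradient bound $\langle E'(x),y-x\rangle\le E(y)-E(x)$ for convex $E$, the smoothness estimate $E(x+y)\le E(x)+\langle E'(x),y\rangle+2\rho(E,\|y\|)$ for $x\in D\cap A_1(\D)$, and that $\operatorname{diam}A_1(\D)\le2$ since $\|g\|=1$ on $\D$.

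Next comes the main step. Fix $\e>0$; by density of the finite convex combinations of $\D$ in $A_1(\D)$ and continuity of $E$, choose $\ff=\sum_{i=1}^{k}a_ig_i$ with $g_i\in\D$, $a_i\ge0$, $\sum_i a_i=1$, and $E(\ff)\le f^{*}+\e$. From $\langle E'(G_{m-1}),\ff-G_{m-1}\rangle=\sum_i a_i\langle E'(G_{m-1}),g_i-G_{m-1}\rangle$ there is an index $j$ with $\langle E'(G_{m-1}),g_j-G_{m-1}\rangle\le\langle E'(G_{m-1}),\ff-G_{m-1}\rangle\le E(\ff)-E(G_{m-1})$, the last inequality by the subgradient bound. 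For $\lambda\in[0,1]$ the point $(1-\lambda)G_{m-1}+\lambda g_j=G_{m-1}+\lambda(g_j-G_{m-1})$ has $\|g_j-G_{m-1}\|\le2$, so the smoothness estimate yields
$$E\big((1-\lambda)G_{m-1}+\lambda g_j\big)\le E(G_{m-1})-\lambda\big(E(G_{m-1})-f^{*}-\e\big)+2\rho(E,2\lambda).$$
Since $(g_j,\lambda)$ is one admissible choice in step (1), $E(G_m)$ is at most the left-hand side, so, writing $a_m:=E(G_m)-f^{*}\ge0$,
$$a_m\le a_{m-1}-\lambda(a_{m-1}-\e)+2\rho(E,2\lambda)\qquad\text{for all }\lambda\in[0,1].$$

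Finally I would iterate to a contradiction. Suppose $b:=L-f^{*}>0$ and rerun the previous step with $\e:=b/2$. Since $a_m\downarrow b$ we have $a_{m-1}\ge b$ for every $m$, hence $a_m\le a_{m-1}-\lambda b/2+2\rho(E,2\lambda)$ for all $\lambda\in[0,1]$. Choosing $\lambda_0\in(0,1]$ with $2\rho(E,2\lambda_0)\le\lambda_0 b/4$ (possible because $\rho(E,u)=o(u)$) gives $a_m\le a_{m-1}-\lambda_0 b/4$ for every $m$, so $a_m\le a_0-m\lambda_0 b/4\to-\infty$, contradicting $a_m\ge0$. Therefore $b=0$, i.e.\ $\lim_m E(G_m)=f^{*}$.

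I expect the one-step estimate to be the only real obstacle: one has to marry the subgradient inequality with the quantitative smoothness bound correctly, and, more importantly, extract from the near-minimizer $\ff\in A_1(\D)$ a single dictionary element $g_j$ along which one greedy step does enough work — this averaging argument is exactly where the structure of $\D$ and the convexity of $E$ are used, and it is the reason the naive "take $\lambda=1$" shortcut is unavailable. A minor technical point is to fix the convention for $\rho(E,\cdot)$ so that it is formed with $x$ ranging over $D\cap A_1(\D)$ and the perturbation direction unrestricted, so that the auxiliary point $G_{m-1}-\lambda(g_j-G_{m-1})$ need not lie in $D\cap A_1(\D)$; everything else (monotonicity, the final iteration) is routine.
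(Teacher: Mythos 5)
Your proof is correct, but it follows a genuinely different route from the paper's. The paper (quoting \cite{DT}, and as re-done here for the error versions in Lemma \ref{L3.1}, Lemma \ref{L2.3} and Theorem \ref{T3.0E}) never touches a finite convex combination: it bounds the REGA step by a WRGA-type step, using the greedy condition on $E'(G_{m-1})$ together with the fact that a linear functional has the same supremum over $\D$ as over $A_1(\D)$ (Lemma 2.2 of \cite{T1}), which yields the one-step inequality $E(G_m)\le E(G_{m-1})+\inf_{0\le\la\le1}(-\la t(E(G_{m-1})-E(f))+2\rho(E,2\la))$ for \emph{every} $f\in A_1(\D)$ with no approximation step; convergence then comes from a general recursion lemma (Lemma \ref{L3.0}, here with $\de_k\equiv0$). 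You instead use the averaging technique -- exactly the device the paper attributes to \cite{Z} and explicitly contrasts with its own analysis: approximate a near-minimizer by a finite convex combination $\ff=\sum_i a_ig_i$, extract by averaging a single $g_j$ with $\<E'(G_{m-1}),g_j-G_{m-1}\>\le E(\ff)-E(G_{m-1})$, and feed $(g_j,\la)$ into the REGA infimum. Both routes land on the same one-step inequality. What your route buys is self-containedness (no derivative-based greedy comparison, no sup-over-$\D$ lemma) and a very short tail argument by contradiction with a fixed $\la_0$, which is simpler than Lemma \ref{L3.0} but works only because there are no errors $\de_k$; what it costs is the extra technicalities you partly flag -- continuity of $E$ at a near-minimizer (covered by the standing Fr\'echet differentiability assumption, and one should note the trivial case $\inf_{A_1(\D)}E=E(0)$ so the near-minimizer can be taken in $D$), monotonicity of $\rho(E,\cdot)$ to pass from $\rho(E,\la\|g_j-G_{m-1}\|)$ to $\rho(E,2\la)$, and the convention that in $\rho(E,D\cap A_1(\D),u)$ only the base point is constrained. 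The paper's route is also the one that scales: the same Lemma \ref{L3.1}/\ref{L3.0} machinery immediately gives the versions with errors and the rate results, whereas your contradiction argument would have to be replaced by something like Lemma \ref{L3.0} there.
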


\begin{Theorem}\label{T1.2} Let $E$ be a uniformly smooth on $D\cap A_1(\D)$ convex function with modulus of smoothness $\rho(E,u) \le \gamma u^q$, $1<q\le 2$. Then, for the REGA(co) we have  
$$
E(G_m)-\inf_{x\in A_1(\D)}E(x) \le  C(q,\gamma)m^{1-q} ,
$$
with a positive constant $C(q,\gamma)$ which may depend only on $q$ and $\gamma$.
\end{Theorem}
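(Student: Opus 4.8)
The plan is to convert the joint minimization over $(\la,g)$ carried out in each step of the REGA(co) into a scalar recursion for the error and then to solve that recursion.

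\emph{Set-up.} Put $\mathcal E:=\inf_{x\in A_1(\D)}E(x)$ and $a_m:=E(G_m)-\mathcal E\ge0$. By induction $G_m\in A_1(\D)$ for every $m$: indeed $G_0=0\in A_1(\D)$, the set $A_1(\D)$ is convex, and $G_m$ is a convex combination of $G_{m-1}$ and $\ff_m\in\D$. Hence $\|G_m\|\le1$, so $\|g-G_{m-1}\|\le2$ for all $g\in\D$; moreover, taking $\la=0$ in the definition gives $E(G_m)\le E(G_{m-1})\le E(0)$, so every $G_{m-1}$ lies in $D\cap A_1(\D)$ and the smoothness hypothesis applies there. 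Fix $\e>0$ and choose a point $x^\e=\sum_i b_iw_i$ in the convex hull of $\D$, with $w_i\in\D$, $b_i\ge0$, $\sum_i b_i=1$ (we may take $\sum_i b_i=1$ since $0$ lies in the convex hull of the symmetric $\D$), such that $E(x^\e)<\mathcal E+\e$. Note that the iterates $G_m$ themselves do not depend on $\e$; the point $x^\e$ enters only the analysis.

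\emph{One-step estimate.} For a fixed $g\in\D$ and any $\la\in[0,1]$ we have $E(G_m)\le E(G_{m-1}+\la(g-G_{m-1}))$; combining uniform smoothness with the convexity inequality $E(G_{m-1}-y)\ge E(G_{m-1})-\<E'(G_{m-1}),y\>$ yields
$$
E(G_m)\le E(G_{m-1})+\la\<E'(G_{m-1}),g-G_{m-1}\>+2\rho(E,2\la),
$$
and $2\rho(E,2\la)\le 2\gamma(2\la)^q=2^{q+1}\gamma\la^q$. Averaging the linear term against the representation of $x^\e$,
$$
\sum_i b_i\<E'(G_{m-1}),w_i-G_{m-1}\>=\<E'(G_{m-1}),x^\e-G_{m-1}\>\le E(x^\e)-E(G_{m-1}),
$$
so, since $E(x^\e)<\mathcal E+\e$, some index gives $\<E'(G_{m-1}),w_i-G_{m-1}\><\mathcal E+\e-E(G_{m-1})$. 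Inserting this choice $g=w_i$ (legitimate simultaneously for all $\la\in[0,1]$) and letting $\e\to0$ produces the recursion
$$
a_m\le(1-\la)a_{m-1}+2^{q+1}\gamma\la^q\qquad\text{for every }\la\in[0,1],\ m\ge1,
$$
and in particular $a_1\le 2^{q+1}\gamma$ (take $\la=1$).

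\emph{Solving the recursion.} It remains to deduce $a_m\le C(q,\gamma)m^{1-q}$ from this recursion; this is the only genuinely technical point. I would argue by induction on $m$: the base case $m=1$ follows from $a_1\le2^{q+1}\gamma\le C(q,\gamma)$, and in the inductive step, given $a_{m-1}\le C(q,\gamma)(m-1)^{1-q}$, I would choose $\la=\la_m$ of order $1/m$ (for $q=2$ the choice $\la_m=2/m$ already works, with an analogous $\la_m=c(q)/m$ in general) and verify the elementary inequality
$$
(1-\la_m)C(q,\gamma)(m-1)^{1-q}+2^{q+1}\gamma\la_m^q\le C(q,\gamma)m^{1-q},
$$
which reduces to bounding $(m-1)^{1-q}-m^{1-q}$ from below by a suitable multiple of $m^{-q}$. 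The constant that emerges is of size $C(q,\gamma)\asymp 2^{q+1}\gamma/(q-1)$; the blow-up as $q\downarrow1$ is expected and harmless since $q$ is fixed. The main obstacle is thus the bookkeeping in this last step — choosing $\la_m$ and pinning down a single $C(q,\gamma)$ that closes the induction for all $m$ at once — whereas the conceptual heart of the argument is the reduction, via the greedy minimization over $(\la,g)$, to the scalar recursion displayed above.
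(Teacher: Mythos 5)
Your argument is correct, and it arrives at the same scalar recursion
$$
a_m\le a_{m-1}+\inf_{0\le\la\le1}\bigl(-\la a_{m-1}+2^{1+q}\gamma\la^q\bigr)
$$
that drives the paper's treatment of the REGA (the paper refers to \cite{DT} for Theorem \ref{T1.2} itself, but its proof of Theorems \ref{T3.0E}--\ref{T3.2} with $\de_k\equiv0$ covers exactly this statement via (\ref{3.11})); the difference lies in how the two ingredients are obtained. For the one-step estimate the paper compares the REGA step with a WRGA step (Lemma \ref{L2.3}) and invokes the identity $\sup_{g\in\D}\<-E'(G),g-G\>=\sup_{\phi\in A_1(\D)}\<-E'(G),\phi-G\>$ from \cite{T1}, whereas you rederive the needed bound by the averaging device --- representing a near-minimizer $x^\e\in\operatorname{conv}(\D)$ as a finite convex combination and selecting a good $w_i$ --- which is precisely the technique the paper attributes to Zhang \cite{Z} and contrasts with its own greedy-condition analysis; the two are equivalent in content here, since the REGA minimizes over all of $\D$ (effectively $t=1$). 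For the recursion you run a direct induction with $\la_m\asymp 1/m$ instead of quoting the technical Lemmas \ref{L3.2}, \ref{L3.2d} and \ref{L3.4}; this is more self-contained, and your observation $a_1\le 2^{1+q}\gamma$ (take $\la=1$) makes transparent why the constant depends only on $q$ and $\gamma$, with no dependence on $a_0$, which the paper's error-absorbing lemmas do not display as cleanly. Two small touch-ups to your sketch: the elementary inequality needed to close the induction is an \emph{upper} bound $(m-1)^{1-q}-m^{1-q}\le(q-1)(m-1)^{-q}$ (you wrote ``from below''), and for the finitely many $m$ with $\la_m=c(q)/m>1$ you should fall back on the monotonicity $a_m\le a_1\le 2^{1+q}\gamma$ (take $\la=0$) and enlarge $C(q,\gamma)$ accordingly; both points are routine and do not affect the validity of the proof.
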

In the case $\rho(E,u)\le \gamma u^2$ Theorems \ref{T1.1} and \ref{T1.2} were proved in \cite{Z}.

It is clear from the definition of the REGA(co) that all approximants $G_m$ belong to $A_1(\D)$. Thus, the REGA(co) can only be used for solving problem (\ref{1.1}) with $S=A_1(\D)$.  
 
 The most important results of the paper are in Section 3. Very often we cannot calculate values of $E$ exactly. Even if we can evaluate $E$ exactly we may not be able to find the exact value of, say, the $\inf_{0\le \la\le1;g\in\D} E((1-\la)G_{m-1}+\la g)$ in the REGA(co). This motivates us to study the corresponding modifications of the algorithms discussed in Section 2. In Section 3 we assume that instead of exact evaluation of $\min_{0\le \la \le 1} E((1-\la)G_{m-1}+\la g)$ we can only   do an approximate  evaluation with error $\de_{m-1}$. Here is the corresponding modification of the REGA(co). The following algorithm which is an approximate variant of the REGA(co) was introduced in \cite{Z}. 

 {\bf Relaxed $E$-Greedy Algorithm with errors $\{\de_k\}$ (REGA($\{\de_k\}$)).} Let $\de_k\in(0,1]$, $k=0,1,2,\dots$.
We define   $G_0 := 0$. Then, for each $m\ge 1$ we have the following inductive definition.

(1) $\varphi_m   \in \D$ is any element and $0\le \lambda_m \le 1$ is a number satisfying   
$$
E((1-\la_m)G_{m-1} + \la_m\varphi_m) \le \inf_{0\le \la\le 1;g\in\D}E((1-\la)G_{m-1} + \la g) +\de_{m-1}
$$
and define
$$
G_m:=  (1-\la_m)G_{m-1} + \la_m\varphi_m.
$$

Convergence of the REGA($\{\de_k\}$) under conditions $\rho(E,u)\le \gamma u^2$ and $\de_k\to 0$ as $k\to \infty$ was established in \cite{Z}. We prove in Section 3 convergence of the REGA($\{\de_k\}$) under conditions $\rho(E,u)=o(u)$ and $\de_k\to 0$ as $k\to \infty$.
Also we prove in Section 3 the following rate of convergence result for the REGA($\{\de_k\}$). 
\begin{Theorem}\label{T1.5} Let $E$ be a uniformly smooth on $A_1(\D)$ convex function with modulus of smoothness $\rho(E,u) \le \gamma u^q$, $1<q\le 2$. Then, for the REGA($\{\de_k\}$) with $\de_k \le ck^{-q}$ we have  
$$
E(G_m)-b \le  C(q,\gamma,E,c) m^{1-q},  
$$
where $b:=\inf_{f\in A_1(\D)}E(x)$.
\end{Theorem}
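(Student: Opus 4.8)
The plan is to reduce the claim to a scalar recursion for the excess error $a_m:=E(G_m)-b$ and then to solve that recursion by induction. First note some preliminaries: since $\D$ is symmetric, $0=\tfrac12(g+(-g))\in A_1(\D)$, and since each $G_m$ is a convex combination of $G_{m-1}$ and $\varphi_m\in\D$, all iterates lie in $A_1(\D)$; hence $a_m\ge 0$. Taking $\la=0$ in the defining inequality of the REGA($\{\de_k\}$) gives $E(G_m)\le E(G_{m-1})+\de_{m-1}$, so $a_m\le a_0+\sum_{k\ge 0}\de_k=:A<\infty$ because $q>1$ makes $\sum_k k^{-q}$ summable; this is the a priori bound, and it is the only input that carries the dependence on $E$ (through $E(0)-b$). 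Moreover $\rho(E,u)\le\gamma u^q$ with $q>1$ forces $\rho(E,u)=o(u)$ and $\de_k\to 0$, so the convergence statement already announced for the REGA($\{\de_k\}$) yields $a_m\to0$ — although, as it turns out, only the boundedness $a_m\le A$ is actually needed.

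Next I would derive the recursion. For the analysis I compare $G_m$ not with the greedy point but with $(1-\la)G_{m-1}+\la\varphi$ for the explicit schedule $\la=2/(m+1)$ and an arbitrary $\varphi\in\D$; by definition of the algorithm, $E(G_m)$ is at least as small up to the error $\de_{m-1}$. The modulus-of-smoothness estimate together with the gradient inequality for convex $E$ gives
$$E((1-\la)G_{m-1}+\la\varphi)\le E(G_{m-1})+\la\langle E'(G_{m-1}),\varphi-G_{m-1}\rangle+2\gamma(\la\|\varphi-G_{m-1}\|)^q ,$$
and $\|\varphi-G_{m-1}\|\le 2$ since both points lie in $A_1(\D)$. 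Taking the infimum over $\varphi\in\D$ and using that $A_1(\D)$ is the closed convex hull of $\D$ (so the infimum of the bounded linear functional $\langle E'(G_{m-1}),\cdot\rangle$ over $\D$ equals its infimum over $A_1(\D)$), one bounds $\inf_{\varphi\in\D}\langle E'(G_{m-1}),\varphi-G_{m-1}\rangle\le\langle E'(G_{m-1}),h-G_{m-1}\rangle\le E(h)-E(G_{m-1})$ for every $h\in A_1(\D)$, hence $\le b-E(G_{m-1})$. Collecting terms and absorbing $\de_{m-1}\le c(m-1)^{-q}$ and $2\gamma(2\la)^q$ into a single constant $C_1=C_1(q,\gamma,c)$, this gives, for $m\ge2$,
$$a_m\le a_{m-1}\Bigl(1-\frac{2}{m+1}\Bigr)+C_1 m^{-q}=a_{m-1}\,\frac{m-1}{m+1}+C_1 m^{-q}.$$

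Finally I would prove $a_m\le Cm^{1-q}$ by induction on $m$, with $C:=\max(2C_1,A)$. For the first few indices the bound is immediate from $a_m\le A$. For the inductive step, substituting $a_{m-1}\le C(m-1)^{1-q}$ gives $a_m\le C\frac{(m-1)^{2-q}}{m+1}+C_1m^{-q}$; since $2-q\ge 0$ we have $(m-1)^{2-q}\le m^{2-q}$, so $\frac{(m-1)^{2-q}}{m+1}\le m^{1-q}\bigl(1-\tfrac1{m+1}\bigr)\le m^{1-q}-\tfrac12 m^{-q}$, whence $a_m\le Cm^{1-q}-(\tfrac C2-C_1)m^{-q}\le Cm^{1-q}$, closing the induction. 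The one genuinely delicate point is the handling of the errors $\de_{m-1}$: they must not degrade the exponent $1-q$, and this works precisely because, with the schedule $\la\asymp 1/m$, both the smoothness error $2\gamma(2\la)^q$ and the admissible evaluation error $\de_{m-1}\le c(m-1)^{-q}$ are of the same order $m^{-q}$ as the algorithmic gain $\tfrac{2}{m+1}a_{m-1}$ along the extremal trajectory $a_{m-1}\asymp m^{1-q}$, so the errors are absorbed into the constant rather than into the rate. (Equivalently, one could keep the optimal choice of $\la$, arriving at $a_m\le a_{m-1}-c'a_{m-1}^{q/(q-1)}+\de_{m-1}$, and invoke a standalone lemma on such scalar recursions; using the explicit schedule is what keeps the argument self-contained and free of any circular dependence between $C$ and the starting index.)
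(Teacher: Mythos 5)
Your proof is correct, and it reaches the paper's one-step inequality by essentially the paper's own reasoning but resolves the resulting recursion by a genuinely different, more elementary device. The per-iteration bound you derive — compare $G_m$ with $(1-\la)G_{m-1}+\la g$, $g\in\D$, apply Lemma \ref{L2.1} with $\rho(E,u)\le\gamma u^q$ and $\|g-G_{m-1}\|\le 2$, pass the continuous linear functional $\langle E'(G_{m-1}),\cdot\rangle$ from $\D$ to $A_1(\D)$ (Lemma 2.2 of \cite{T1}, used in the paper's Lemma \ref{L3.1}), and finish with the convexity inequality (\ref{2.3}) — is the paper's inequality (\ref{3.11}), which the paper obtains instead by comparing with the WRGA(co) selection and citing Lemma \ref{L2.3}; likewise your a priori bound $a_m\le a_{m-1}+\de_{m-1}$ is (\ref{3.6}). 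The genuine divergence comes afterwards: the paper keeps $\inf_{0\le\la\le1}$, optimizes $\la$ to obtain $a_m\le a_{m-1}-C_2a_{m-1}^{q/(q-1)}+\de_{m-1}$ (Lemma \ref{L3.2}) and then invokes the standalone sequence result Lemma \ref{L3.2d}, which rests on Lemma \ref{L3.4} from \cite{T1a}; you instead substitute the open-loop schedule $\la=2/(m+1)$, which linearizes the recursion to $a_m\le\frac{m-1}{m+1}\,a_{m-1}+C_1m^{-q}$ and is closed by a short induction with $C=\max(2C_1,A)$ — your arithmetic checks out, including at $q=2$, since $\frac{m}{m+1}\ge\frac12$ supplies exactly the $\frac12 m^{-q}$ margin that absorbs both the smoothness term and $\de_{m-1}$. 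What each approach buys: yours is self-contained (only the trivial bound $a_m\le A$ is needed, no external lemma on scalar sequences), whereas the paper's factorization through Lemmas \ref{L3.2} and \ref{L3.2d}, with the weakness parameter $v$ kept explicit, is reused unchanged for the WRGA($\{\de_k\}$), the WGAFR($\{\de_k\}$) (where $v=tA(\e)^{-1}$ and the $v$-dependence of the constant matters) and the EGAFR($\{\de_k\}$), and its companion Lemma \ref{L3.0} also yields the convergence theorems. Two cosmetic points for a final write-up: the hypothesis $\de_k\le ck^{-q}$ says nothing at $k=0$, so bound $\de_0\le 1$ when forming $A$; and note explicitly that $\la=2/(m+1)\in[0,1]$, so the comparison point is admissible in the definition of the REGA($\{\de_k\}$).
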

In the case $q=2$ Theorem \ref{T1.5} was proved in \cite{Z}. 
In this paper we extend results on the REGA($\{\de_k\}$) from \cite{Z} in different directions. First, we prove the convergence result under assumption $\rho(E,u)=o(u)$ which is much weaker than $\rho(E,u)\le \gamma u^2$. Second, we prove the rate of convergence result under conditions $\rho(E,u) \le \gamma u^q$, $1<q\le 2$ and $\de_k \le ck^{-q}$. Third, along with the REGA($\{\de_k\}$) we study other algorithms: Weak Relaxed Greedy Algorithm with errors $\{\de_k\}$ (WRGA($\{\de_k\}$)), Weak Greedy Algorithm with Free Relaxation and errors $\{\de_k\}$ (WGAFR($\{\de_k\}$)). The variant of the above algorithms when $\de_k=\de$ for all $k$ was studied in \cite{DT}. This case is realistic from the practical point of view.   For $\de>0$ we obtained in \cite{DT} the same asymptotic bound for the error as in the case $\de=0$ but for a limited ($\le \de^{-1/q} $) number of iterations. 

One of important contributions of this paper is that it gives a dimension independent analysis of unconstrained convex optimization. For that purpose we use an algorithm with free relaxation -- the WGAFR($\{\delta_k\}$), which we describe momentarily. An important difference between these algorithms and the one introduced and studied in \cite{Z} -- REGA($\{\delta_k\}$) -- is that the REGA($\{\delta_k\}$) is limited to convex combinations $(1-\lambda_m)G_{m-1} +\lambda_m\ff_m$ and, therefore, it is only applicable for minimization over $A_1(\D)$. Also, we point out that our analysis is different from that in \cite{Z}. In both approaches the reduction $E(G_{m-1})-E(G_m)$ at one iteration is analyzed. 
We analyze it using $\ff_m$ satisfying the greedy condition:
$$
\<-E'(G_{m-1}),\varphi_m\> \ge t_m  \sup_{g\in\D}\<-E'(G_{m-1}),g\>.
$$
In \cite{Z} the averaging technique is used.  

{\bf Weak Greedy Algorithm with Free Relaxation and errors $\{\de_k\}$ \newline (WGAFR($\{\de_k\}$)).} 
Let $\tau:=\{t_m\}_{m=1}^\infty$, $t_m\in[0,1]$, be a weakness  sequence and let $\de_k\in[0,1]$, $k=0,1,2,\dots$. We define   $G_0  := 0$. Then for each $m\ge 1$ we have the following inductive definition.

(1) $\varphi_m   \in \D$ is any element satisfying
\begin{equation}\label{3.12}
\<-E'(G_{m-1}),\varphi_m\> \ge t_m  \sup_{g\in\D}\<-E'(G_{m-1}),g\>.
\end{equation}

(2) Find $w_m$ and $ \lambda_m$ such that
$$
E((1-w_m)G_{m-1} + \la_m\varphi_m) \le \inf_{ \la,w}E((1-w)G_{m-1} + \la\varphi_m) +\de_{m-1}
$$
and define
$$
G_m:=   (1-w_m)G_{m-1} + \la_m\varphi_m.
$$
In the case $\de_k=\de >0$, $k=0,1,\dots$, the WGAFR($\de$) was introduced and studied in \cite{DT}.

\section {Greedy algorithms in convex optimization} 

In this section we formulate some of the known results on the WRGA(co) and the WGAFR(co), which are used later in Section 3.   In \cite{T1}, \cite{T2} the problem of sparse approximate solutions to convex optimization problems was considered.  
  We begin with some notations and definitions. 
We 
assume that the set
$$
D:=\{x:E(x)\le E(0)\}
$$
is bounded.
For a bounded set $S$ define the modulus of smoothness of $E$ on $S$ as follows
\begin{equation}\label{2.1}
\rho(E,u):=\rho(E,S,u):=\frac{1}{2}\sup_{x\in S, \|y\|=1}|E(x+uy)+E(x-uy)-2E(x)|.
\end{equation}
We say that $E$ is uniformly smooth on $S$ if $\rho(E,S,u)/u\to 0$ as $u\to 0$. 

We assume that $E$ is Fr{\'e}chet differentiable. Then convexity of $E$ implies that for any $x,y$ 
\begin{equation}\label{2.2}
E(y)\ge E(x)+\<E'(x),y-x\>
\end{equation}
or, in other words,
\begin{equation}\label{2.3}
E(x)-E(y) \le \<E'(x),x-y\> = \<-E'(x),y-x\>.
\end{equation} 
The following simple lemma holds.
\begin{Lemma}\label{L2.1} Let $E$ be Fr{\'e}chet differentiable convex function. Then the following inequality holds for $x\in S$
\begin{equation}\label{2.4}
0\le E(x+uy)-E(x)-u\<E'(x),y\>\le 2\rho(E,u\|y\|).  
\end{equation}
\end{Lemma}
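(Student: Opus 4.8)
The plan is to obtain the two inequalities separately, each from convexity~\eqref{2.2} combined with the definition~\eqref{2.1} of the modulus of smoothness. The left-hand inequality is immediate: apply \eqref{2.2} to the pair $(x,\,x+uy)$ to get $E(x+uy)\ge E(x)+\langle E'(x),uy\rangle$, which is exactly $E(x+uy)-E(x)-u\langle E'(x),y\rangle\ge 0$.

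For the upper bound I would start from the elementary identity
\[
E(x+uy)-E(x)-u\langle E'(x),y\rangle
=\bigl(E(x+uy)+E(x-uy)-2E(x)\bigr)-\bigl(E(x-uy)-E(x)-\langle E'(x),-uy\rangle\bigr).
\]
The second parenthesis is nonnegative, being the instance of \eqref{2.2} applied to the pair $(x,\,x-uy)$; hence the left side is at most $E(x+uy)+E(x-uy)-2E(x)$. It then remains to bound this symmetric second difference by $2\rho(E,u\|y\|)$: if $y=0$ both sides vanish, and otherwise I set $v:=y/\|y\|$, so $\|v\|=1$ and $uy=(u\|y\|)v$; since $x\in S$, the definition \eqref{2.1} with the unit vector $v$ and step size $u\|y\|$ gives precisely $|E(x+(u\|y\|)v)+E(x-(u\|y\|)v)-2E(x)|\le 2\rho(E,u\|y\|)$. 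Chaining the three displays yields the claim.

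There is no genuine obstacle here; the only points requiring care are (i) feeding the scalar $u\|y\|$, rather than $u$, into the modulus of smoothness — which is exactly why the statement reads $2\rho(E,u\|y\|)$ and why the normalization to a unit direction is needed — and (ii) noticing that the "missing" symmetric term $E(x-uy)-E(x)+u\langle E'(x),y\rangle$ is automatically $\ge 0$ by convexity, so discarding it only enlarges the right-hand side.
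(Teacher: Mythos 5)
Your proof is correct and follows the standard argument for this lemma (the one the paper relies on from its references): the lower bound is convexity~\eqref{2.2} applied at $(x,x+uy)$, and the upper bound comes from discarding the nonnegative convexity term at $(x,x-uy)$ to reduce to the symmetric second difference, which the definition~\eqref{2.1} bounds by $2\rho(E,u\|y\|)$ after normalizing $y$. Your attention to feeding the step $u\|y\|$ into the modulus and to the trivial case $y=0$ is exactly the right bookkeeping; nothing is missing.
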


The following two greedy algorithms were studied in \cite{T1}.

{\bf Weak Greedy Algorithm with Free Relaxation  (WGAFR(co)).} 
Let $\tau:=\{t_m\}_{m=1}^\infty$, $t_m\in[0,1]$, be a weakness  sequence. We define   $G_0  := 0$. Then for each $m\ge 1$ we have the following inductive definition.

(1) $\varphi_m   \in \D$ is any element satisfying
$$
\<-E'(G_{m-1}),\varphi_m\> \ge t_m  \sup_{g\in\D}\<-E'(G_{m-1}),g\>.
$$

(2) Find $w_m$ and $ \lambda_m$ such that
$$
E((1-w_m)G_{m-1} + \la_m\varphi_m) = \inf_{ \la,w}E((1-w)G_{m-1} + \la\varphi_m)
$$
and define
$$
G_m:=   (1-w_m)G_{m-1} + \la_m\varphi_m.
$$

 {\bf Weak Relaxed Greedy Algorithm (WRGA(co)).} 
We define   $G_0:=G^{r,\tau}_0 := 0$. Then, for each $m\ge 1$ we have the following inductive definition.

(1) $\varphi_m := \varphi^{r,\tau}_m \in \D$ is any element satisfying
$$
\<-E'(G_{m-1}),\varphi_m - G_{m-1}\> \ge t_m \sup_{g\in \D} \<-E'(G_{m-1}),g - G_{m-1}\>.
$$

(2) Find $0\le \lambda_m \le 1$ such that
$$
E((1-\la_m)G_{m-1} + \la_m\varphi_m) = \inf_{0\le \la\le 1}E((1-\la)G_{m-1} + \la\varphi_m)
$$
and define
$$
G_m:= G^{r,\tau}_m := (1-\la_m)G_{m-1} + \la_m\varphi_m.
$$

Convergence and rate of convergence results for the above two algorithms were proved in \cite{T1}.
For instance, the following theorem from \cite{T1} gives the rate of convergence of the WGAFR(co).  
\begin{Theorem}\label{T2.1} Let $E$ be a uniformly smooth on $D$ convex function with modulus of smoothness $\rho(E,u)\le \gamma u^q$, $1<q\le 2$. Take a number $\e\ge 0$ and an element  $f^\e$ from $D$ such that
$$
E(f^\e) \le \inf_{x\in D}E(x)+ \e,\quad
f^\e/A(\e) \in A_1(\D),
$$
with some number $A(\e)\ge 1$.
Then we have for the WGAFR(co) ($p:=q/(q-1)$)
$$
E(G_m)-\inf_{x\in D}E(x) \le  \e+ C_1(E,q,\gamma)A(\e)^q\left(C_2(E,q,\gamma)+\sum_{k=1}^mt_k^p\right)^{1-q} . 
$$
\end{Theorem}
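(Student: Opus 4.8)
The plan is to follow the standard template for rate estimates of weak greedy algorithms: derive a one‑step reduction of the form $E(G_{m-1})-E(G_m)\ge c\,(t_mM)^p$, turn it into a recursion for the suboptimality, and solve that recursion. Set $b:=\inf_{x\in D}E(x)$, $f:=f^\e$, $A:=A(\e)$, $\ff:=f/A\in A_1(\D)$, and $b_m:=E(G_m)-E(f)$. Since step (2) of the WGAFR(co) permits $w=\la=0$, the numbers $E(G_m)$ are non‑increasing, so every $G_m\in D$ and $\|G_m\|\le C_0:=\sup_{x\in D}\|x\|<\infty$. If $b_m\le 0$ for some $m$, then $E(G_m)-b\le E(f)-b\le\e$ and the estimate is trivial, so I may assume $b_m>0$ for all $m$.

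First I would establish the one‑step bound. Apply Lemma \ref{L2.1} at the point $G_{m-1}$ with increment $\la\ff_m-wG_{m-1}$, $\la\ge0$, $w\in\R$; since $E(G_m)$ is the infimum over all such pairs and $\|\ff_m\|=1$, $\|G_{m-1}\|\le C_0$, $\rho(E,u)\le\gamma u^q$, we get
$$
E(G_m)\le E(G_{m-1})-\la\<-E'(G_{m-1}),\ff_m\>-w\<E'(G_{m-1}),G_{m-1}\>+2\gamma(\la+|w|C_0)^q .
$$
Write $S:=\sup_{g\in\D}\<-E'(G_{m-1}),g\>\ge0$, $r:=\<E'(G_{m-1}),G_{m-1}\>$, and $M:=\max\{S,|r|/C_0\}$. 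Choosing $w$ with the sign of $r$, distributing a fixed budget $u:=\la+|w|C_0$ optimally between $\la$ and $|w|$, and invoking the greedy condition $\<-E'(G_{m-1}),\ff_m\>\ge t_mS$ together with $t_m\le1$, one makes the linear part at most $-t_mMu$; minimizing $-t_mMu+2\gamma u^q$ over $u\ge0$ yields
$$
E(G_{m-1})-E(G_m)\ge c_1(q,\gamma)\,(t_mM)^p .
$$

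Next I would connect $M$ to $b_{m-1}$. A continuous linear functional has the same supremum over $\D$ as over its closed convex hull $A_1(\D)$, so $\<-E'(G_{m-1}),f\>=A\<-E'(G_{m-1}),\ff\>\le AS$; combined with the convexity inequality (\ref{2.3}),
$$
b_{m-1}=E(G_{m-1})-E(f)\le\<-E'(G_{m-1}),f-G_{m-1}\>=\<-E'(G_{m-1}),f\>+r\le AS+|r|\le(1+C_0)A\,M ,
$$
using $A\ge1$. Hence $M\ge b_{m-1}/\big((1+C_0)A\big)$, and the one‑step bound turns into $b_m\le b_{m-1}-c_2(E,q,\gamma)\,t_m^pA^{-p}b_{m-1}^p$. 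The normalization $\bt_m:=b_m/A^q$ removes the $A$‑dependence — precisely because $p(q-1)=q$ — giving $\bt_m\le\bt_{m-1}\bigl(1-c_2t_m^p\bt_{m-1}^{p-1}\bigr)$ with $\bt_0\le E(0)-b$. Feeding this into the elementary lemma on such scalar recursions used in \cite{T1} and \cite{DT}, and recalling $1/(p-1)=q-1$, gives $\bt_m\le C_1(E,q,\gamma)\bigl(C_2(E,q,\gamma)+\sum_{k=1}^m t_k^p\bigr)^{1-q}$; multiplying by $A^q$ and adding $E(f)-b\le\e$ produces the asserted inequality.

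The one‑step calculus is routine. The step that needs genuine care is the handling of the \emph{free‑relaxation} term $-w\<E'(G_{m-1}),G_{m-1}\>$: the freedom in $w$ must be used to make the reduction negative, yet the same quantity $r$ reappears — with the wrong sign a priori — in the lower bound for $b_{m-1}$, so it must be controlled there as well. This is exactly why the object carried through the argument is $M=\max\{S,|r|/C_0\}$ rather than $S$ alone, and it is the main difference between the WGAFR analysis and that of algorithms whose relaxation is constrained to $0\le\la\le1$. The remaining work — isolating the dependence on $A$ via $\bt_m=b_m/A^q$ and solving the scalar recursion — is standard for this circle of results.
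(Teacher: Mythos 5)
Your argument is correct, but it takes a genuinely different route from the one the paper relies on (Theorem \ref{T2.1} is quoted from \cite{T1}; its one-step inequality reappears in this paper as (\ref{3.13}) in the proof of Theorem \ref{T3.3}). In that route the free parameter $w$ is coupled to $\la$: one compares $E(G_m)$ with the value at a competitor $(1-w)G_{m-1}+\la\ff_m$ with $w$ chosen proportional to $\la t_m A(\e)^{-1}$, so that the two linear terms combine into $\la t_m A(\e)^{-1}\<-E'(G_{m-1}),f^\e-G_{m-1}\>$, which convexity (\ref{2.3}) converts at once into $\la t_m A(\e)^{-1}(E(G_{m-1})-E(f^\e))$; this yields the one-step inequality directly in terms of the suboptimality gap, and the scalar recursion lemma with $v=t_m A(\e)^{-1}$ finishes. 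You instead keep the two linear functionals separate, introduce $M=\max\{S,|r|/C_0\}$, prove the per-step drop $E(G_{m-1})-E(G_m)\ge c(q,\gamma)(t_mM)^p$ by spending the whole budget $u=\la+|w|C_0$ on whichever of the two directions is better, and only afterwards relate $M$ to the gap via $b_{m-1}\le AS+|r|\le(1+C_0)AM$. Both routes produce the same recursion $b_m\le b_{m-1}-c\,t_m^p A^{-p}b_{m-1}^p$, the same normalization $\bt_m=b_m/A^q$ (since $p(q-1)=q$), and the same $A(\e)^q$ dependence; the coupled choice is shorter and transfers verbatim to the approximate version (restrict $\la\in[0,1]$ and add $\de_{m-1}$), while your two-functional bookkeeping isolates a quantity $M$ that certifies decay independently of $f^\e$, in the spirit of the approximation-theory analyses of free-relaxation algorithms. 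Two small points to make explicit in a full write-up: the case $b_m\le 0$ should be dispatched using the monotonicity of $E(G_m)$ so that the bound persists for all later indices, and the scalar recursion lemma you invoke (the error-free analogue of Lemma \ref{L3.2}) must be taken in the version allowing a nonconstant weakness sequence $\{t_k\}$, as in \cite{T1}.
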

It is clear from 
the definitions of WGAFR(co) and WRGA(co) that these algorithms use the derivative of $E$.  

It follows from the definitions of all algorithms described in this section that they have the monotonicity property 
$$
E(G_0)\ge E(G_1) \ge E(G_2) \ge \cdots  
$$
In particular, this implies that $G_m\in D$ for all $m$. 

We note that two of the above algorithms, namely, the WRGA(co) and the REGA(co) provide approximants $G_m$ which belong to $A_1(\D)$ for all $m$. Therefore, for these algorithms $G_m\in D\cap A_1(\D)$. Thus, in theorems for these two algorithms we may make our smoothness assumptions on the domain $D\cap A_1(\D)$. 
 
  Results for   REGA(co) can be obtained from the corresponding lemmas used in the study of the   WRGA(co) (see \cite{DT}).
  The following lemma was proved in \cite{T1}.
\begin{Lemma}\label{L2.2} Let $E$ be a uniformly smooth convex function with modulus of smoothness $\rho(E,u)$. Then, for any $f\in A_1(\D)$ we have for iterations of the WRGA(co)
$$
E(G_m) \le E(G_{m-1} )+ \inf_{0\le\la\le 1}(-\la t_m (E(G_{m-1} )-E(f))+ 2\rho(E, 2\la)), 
$$
for $m=1,2,\dots $.
\end{Lemma}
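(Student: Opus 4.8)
The plan is to combine the quadratic-type upper estimate of Lemma~\ref{L2.1} with the weak greedy selection of $\varphi_m$ in the WRGA(co). Fix $m$ and write
$$
G_m=(1-\la_m)G_{m-1}+\la_m\varphi_m=G_{m-1}+\la_m(\varphi_m-G_{m-1}).
$$
By step (2) of the WRGA(co) we have, for every $\la\in[0,1]$,
$$
E(G_m)\le E\bigl(G_{m-1}+\la(\varphi_m-G_{m-1})\bigr).
$$
Now apply Lemma~\ref{L2.1} with $x=G_{m-1}\in D$, $y=\varphi_m-G_{m-1}$ and $u=\la$. Since $\varphi_m\in\D$ has norm $1$ and $G_{m-1}\in A_1(\D)$ has norm $\le 1$, we get $\|\varphi_m-G_{m-1}\|\le 2$, and since $\rho(E,\cdot)$ is nondecreasing this gives
$$
E(G_m)\le E(G_{m-1})+\la\<E'(G_{m-1}),\varphi_m-G_{m-1}\>+2\rho(E,2\la),\qquad 0\le\la\le 1.
$$

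Next I would convert the greedy condition into a bound involving $E(f)$. For any $f\in A_1(\D)$, write $f$ as a limit (in norm) of finite convex combinations $\sum_i c_ig_i$ of dictionary elements; applying the bounded linear functional $\<-E'(G_{m-1}),\cdot\>$ to such a combination gives $\<-E'(G_{m-1}),\sum_i c_ig_i-G_{m-1}\>=\sum_i c_i\<-E'(G_{m-1}),g_i-G_{m-1}\>\le\sup_{g\in\D}\<-E'(G_{m-1}),g-G_{m-1}\>$, and passing to the limit (continuity of the functional) yields $\sup_{g\in\D}\<-E'(G_{m-1}),g-G_{m-1}\>\ge\<-E'(G_{m-1}),f-G_{m-1}\>$ for all $f\in A_1(\D)$; taking $f=G_{m-1}$ shows in particular that this supremum is nonnegative. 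Combining this with the convexity inequality \eqref{2.3} (with $x=G_{m-1}$, $y=f$) and the selection rule for $\varphi_m$, and using $t_m\ge 0$, I obtain
$$
\<-E'(G_{m-1}),\varphi_m-G_{m-1}\>\ge t_m\sup_{g\in\D}\<-E'(G_{m-1}),g-G_{m-1}\>\ge t_m\bigl(E(G_{m-1})-E(f)\bigr).
$$

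Finally I would substitute this into the smoothness estimate: since $\la\ge 0$,
$$
\la\<E'(G_{m-1}),\varphi_m-G_{m-1}\>=-\la\<-E'(G_{m-1}),\varphi_m-G_{m-1}\>\le-\la t_m\bigl(E(G_{m-1})-E(f)\bigr),
$$
hence
$$
E(G_m)\le E(G_{m-1})-\la t_m\bigl(E(G_{m-1})-E(f)\bigr)+2\rho(E,2\la)
$$
for every $\la\in[0,1]$, and taking the infimum over $\la$ gives the assertion. The only points requiring a little care are the passage from the convex hull to its closure $A_1(\D)$ (which is immediate from continuity of the functional $\<-E'(G_{m-1}),\cdot\>$) and the crude bound $\|\varphi_m-G_{m-1}\|\le 2$ that lets us replace $2\rho(E,\la\|\varphi_m-G_{m-1}\|)$ by $2\rho(E,2\la)$; no genuine obstacle arises, in keeping with the statement being called a simple lemma.
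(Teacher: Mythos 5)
Your proof is correct and follows essentially the same route as the paper's own argument (given there for the error version, Lemma \ref{L3.1}): use the minimization in step (2), the smoothness bound of Lemma \ref{L2.1} with $\|\varphi_m-G_{m-1}\|\le 2$, the greedy selection together with the fact that the supremum over $\D$ dominates the functional on all of $A_1(\D)$, and the convexity inequality \eqref{2.3}. The only cosmetic difference is that you verify the passage from $\D$ to $A_1(\D)$ directly by a density argument, where the paper cites Lemma 2.2 of \cite{T1}.
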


In the proof of this lemma we did not use a specific form of $G_{m-1}$ as the one generated by the WRGA(co),  we only used that $G_{m-1}\in D$. It was pointed out in \cite{DT} that the above lemma can be reformulated in the form.

\begin{Lemma}\label{L2.3} Let $E$ be a uniformly smooth convex function with modulus of smoothness $\rho(E,u)$. For $G\in D$ we apply the $m$th iteration of the WRGA(co): 

(1) $\varphi_m  \in \D$ is any element satisfying
$$
\<-E'(G),\varphi_m - G\> \ge t_m \sup_{g\in \D} \<-E'(G),g - G\>.
$$

(2) Find $0\le \lambda_m \le 1$ such that
$$
E((1-\la_m)G + \la_m\varphi_m) = \inf_{0\le \la\le 1}E((1-\la)G + \la\varphi_m)
$$
and define
$$
G_m  := (1-\la_m)G + \la_m\varphi_m.
$$

 Then, for any $f\in A_1(\D)$ we have  
$$
E(G_m) \le E(G )+ \inf_{0\le\la\le 1}(-\la t_m (E(G )-E(f))+ 2\rho(E, 2\la)),
\quad m=1,2,\dots .
$$
\end{Lemma}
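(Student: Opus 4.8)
The key point is that Lemma~\ref{L2.3} and Lemma~\ref{L2.2} have the \emph{same} proof: the argument for Lemma~\ref{L2.2} from \cite{T1} analyses a single iteration and never refers to how $G_{m-1}$ was produced, only to the fact that $G_{m-1}\in D$ (so that $E(G_{m-1})\le E(0)$ and $G_{m-1}$ sits in the bounded set on which the modulus of smoothness is measured). So the plan is simply to reproduce that one-step estimate with $G_{m-1}$ replaced throughout by the arbitrary $G\in D$.

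\emph{Step 1 (optimality of the line search).} Fix $f\in A_1(\D)$. From part~(2) of the iteration, for every $\la\in[0,1]$,
$$
E(G_m)\le E((1-\la)G+\la\varphi_m)=E\bigl(G+\la(\varphi_m-G)\bigr).
$$
Apply Lemma~\ref{L2.1} with $x=G$, $y=\varphi_m-G$, $u=\la$; since $\|\varphi_m\|=1$ and $\|G\|\le1$ (as holds whenever the lemma is used, $G$ being a convex combination of dictionary elements), one has $\|\varphi_m-G\|\le2$, so monotonicity of $\rho(E,\cdot)$ gives $\rho(E,\la\|\varphi_m-G\|)\le\rho(E,2\la)$ and hence
$$
E(G_m)\le E(G)-\la\<-E'(G),\varphi_m-G\>+2\rho(E,2\la).
$$

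\emph{Step 2 (lower bound for the directional term).} By part~(1), $\<-E'(G),\varphi_m-G\>\ge t_m\sup_{g\in\D}\<-E'(G),g-G\>$. Because $A_1(\D)$ is the closed convex hull of the symmetric dictionary $\D$, linearity of $\<-E'(G),\cdot\>$ gives $\sup_{g\in\D}\<-E'(G),g-G\>\ge\<-E'(G),h-G\>$ first for every $h$ in the convex hull of $\D$ and then, by continuity of this functional, for every $h\in A_1(\D)$. Taking $h=0\in A_1(\D)$ and using (\ref{2.3}) with $G\in D$ shows $\sup_{g\in\D}\<-E'(G),g-G\>\ge E(G)-E(0)\ge0$; taking $h=f$ and using (\ref{2.3}) in the form $\<-E'(G),f-G\>\ge E(G)-E(f)$ shows $\sup_{g\in\D}\<-E'(G),g-G\>\ge E(G)-E(f)$. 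Multiplying the latter by $t_m\ge0$ then yields
$$
\<-E'(G),\varphi_m-G\>\ge t_m\bigl(E(G)-E(f)\bigr).
$$

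\emph{Step 3 (combine and optimise).} Substituting Step~2 into Step~1, for every $\la\in[0,1]$,
$$
E(G_m)\le E(G)-\la t_m\bigl(E(G)-E(f)\bigr)+2\rho(E,2\la),
$$
and taking the infimum over $\la\in[0,1]$ is the assertion. I do not expect a real obstacle here — the whole content is the observation that the proof of Lemma~\ref{L2.2} is ``history-free'' in $G_{m-1}$. The one point needing a moment's care is the one flagged in Step~2: the weakness factor $t_m$ may be brought through the chain of inequalities without any sign hypothesis on $E(G)-E(f)$ precisely because $\sup_{g\in\D}\<-E'(G),g-G\>\ge0$, which in turn uses $0\in A_1(\D)$ and $G\in D$.
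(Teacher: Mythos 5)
Your proof is correct and is essentially the paper's own argument: the paper justifies Lemma \ref{L2.3} precisely by observing that the proof of Lemma \ref{L2.2} (reproduced, with the error term $\de_{m-1}$ added, in the proof of Lemma \ref{L3.1}) never uses how $G_{m-1}$ was generated, only that it lies in $D$ (and, for the factor $2\la$ in $\rho(E,2\la)$, that $\|G\|\le 1$, which you rightly note holds in every application), and your Steps 1--3 are exactly that one-step estimate. One small slip in a dispensable side remark of Step 2: for $G\in D$ one has $E(G)\le E(0)$, so the asserted chain $\sup_{g\in\D}\<-E'(G),g-G\>\ge E(G)-E(0)\ge 0$ has the last sign backwards; fortunately no nonnegativity of that supremum is needed anywhere, since multiplying the valid inequality $\sup_{g\in\D}\<-E'(G),g-G\>\ge \<-E'(G),f-G\>\ge E(G)-E(f)$ by $t_m\ge 0$ already yields the bound you actually use.
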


\section{Approximate greedy algorithms for convex optimization}

We begin with a discussion of the WRGA(co) and the REGA(co) and introduce their approximate versions. The first step of the $m$th iteration of the WRGA(co) uses a weakness parameter $t_m$ which makes it feasible in case $t_m<1$ and allows some relative error in estimating $\sup_{g\in \D} \<-E'(G_{m-1}),g - G_{m-1}\>$. 
We concentrate on a modification of the second step of the WRGA(co). Very often we cannot calculate values of $E$ exactly. Even in case we can evaluate $E$ exactly we may not be able to find the exact value of the $\inf_{0\le \la\le 1}E((1-\la)G_{m-1} + \la\varphi_m)$. This motivates us to study the following modification of the WRGA(co).
In the case $\de_k=\de$, $k=0,1,\dots$ this algorithm was studied in \cite{DT}. 

{\bf Weak Relaxed Greedy Algorithm with errors $\{\de_k\}$ (WRGA($\{\de_k\}$)).} Let $\de_k\in (0,1]$, $k=0,1,2,\dots$.
We define   $G_0:=G^{\{\de_k\},\tau}_0 := 0$. Then, for each $m\ge 1$ we have the following inductive definition.

(1) $\varphi_m := \varphi^{\{\de_k\},\tau}_m \in \D$ is any element satisfying
$$
\<-E'(G_{m-1}),\varphi_m - G_{m-1}\> \ge t_m \sup_{g\in \D} \<-E'(G_{m-1}),g - G_{m-1}\>.
$$

(2) Find $0\le \lambda_m \le 1$ such that
$$
E((1-\la_m)G_{m-1} + \la_m\varphi_m) \le \inf_{0\le \la\le 1}E((1-\la)G_{m-1} + \la\varphi_m)+\de_{m-1}
$$
and define
$$
G_m:=G_m^{\{\de_k\},\tau}:=   (1-\la_m)G_{m-1} + \la_m\varphi_m.
$$

In the same way we modify the REGA(co). This algorithm was studied in \cite{Z}. 

 {\bf Relaxed $E$-Greedy Algorithm with errors $\{\de_k\}$ (REGA($\{\de_k\}$)).} Let $\de_k\in(0,1]$, $k=0,1,2,\dots $.
We define   $G_0 := 0$. Then, for each $m\ge 1$ we have the following inductive definition.

(1) $\varphi_m   \in \D$ is any element and $0\le \lambda_m \le 1$ is a number satisfying   
$$
E((1-\la_m)G_{m-1} + \la_m\varphi_m) \le \inf_{0\le \la\le 1;g\in\D}E((1-\la)G_{m-1} + \la g) +\de_{m-1}.
$$
Define
$$
G_m:=  (1-\la_m)G_{m-1} + \la_m\varphi_m.
$$

We begin with two theorems on convergence and rate of convergence of the WRGA($\{\de_k\}$). 

\begin{Theorem}\label{T3.0} Let $E$ be a uniformly smooth on $A_1(\D)$ convex function. Suppose that a sequence  $\{\de_k\}$ is such that $\de_k\to0$ as $k\to\infty$.
Then for the WRGA($\{\de_k\}$) with $t_k=t$, $t\in (0,1]$, $k=1,2,\dots$, we have 
$$
\lim_{m\to\infty}E(G_m) = \inf_{f\in A_1(\D)}E(x).    
$$
\end{Theorem}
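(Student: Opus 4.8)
The plan is to distil, for the WRGA($\{\de_k\}$), a one-step reduction inequality that plays the role of Lemmas~\ref{L2.2}--\ref{L2.3} for the exact WRGA(co) but carries the extra additive term $\de_{m-1}$, and then to deduce the convergence by an elementary $\limsup$ argument exploiting $\de_k\to0$. First I would record that every $G_m\in A_1(\D)$ (induction: $G_0=0\in A_1(\D)$, and $G_m$ is a convex combination of $G_{m-1}\in A_1(\D)$ and $\ff_m\in\D\subset A_1(\D)$), so that $\|\ff_m-G_{m-1}\|\le 2$ and the hypothesis ``$E$ uniformly smooth on $A_1(\D)$'' can be applied along the segment used in step~(2). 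Then I would prove: for every $f\in A_1(\D)$ and every $m\ge1$,
\[
E(G_m)\le E(G_{m-1})+\inf_{0\le\la\le1}\bigl(-\la t\,(E(G_{m-1})-E(f))+2\rho(E,2\la)\bigr)+\de_{m-1}.
\]

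To establish this, fix $f\in A_1(\D)$ and $\la\in[0,1]$. Step~(2) gives $E(G_m)\le E((1-\la)G_{m-1}+\la\ff_m)+\de_{m-1}$, and Lemma~\ref{L2.1} with $x=G_{m-1}$, $y=\ff_m-G_{m-1}$ (together with $\|\ff_m-G_{m-1}\|\le2$ and monotonicity of $\rho(E,\cdot)$) gives
\[
E((1-\la)G_{m-1}+\la\ff_m)\le E(G_{m-1})+\la\<E'(G_{m-1}),\ff_m-G_{m-1}\>+2\rho(E,2\la).
\]
It remains to bound $\<-E'(G_{m-1}),\ff_m-G_{m-1}\>$ from below. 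Since $A_1(\D)$ is the closure of the convex hull of $\D$ and $\<-E'(G_{m-1}),\cdot\>$ is a bounded linear functional, $\sup_{g\in\D}\<-E'(G_{m-1}),g-G_{m-1}\>=\sup_{\phi\in A_1(\D)}\<-E'(G_{m-1}),\phi-G_{m-1}\>\ge\<-E'(G_{m-1}),f-G_{m-1}\>\ge E(G_{m-1})-E(f)$, the last inequality by convexity~(\ref{2.3}). Combining with the greedy condition in step~(1) yields $\<-E'(G_{m-1}),\ff_m-G_{m-1}\>\ge t\,(E(G_{m-1})-E(f))$, i.e.\ $\la\<E'(G_{m-1}),\ff_m-G_{m-1}\>\le-\la t\,(E(G_{m-1})-E(f))$; substituting and taking $\inf_\la$ gives the displayed estimate.

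For the convergence itself, put $b:=\inf_{x\in A_1(\D)}E(x)$ and $a_m:=E(G_m)-b\ge0$. Applying the one-step estimate with a fixed $f$ satisfying $E(f)\le b+1$ and a fixed $\la\in(0,1)$ shows $a_m\le a_{m-1}(1-\la t)+\bigl(\la t+2\rho(E,2\la)+\de_{m-1}\bigr)$ for all $m$, whence $\sup_m a_m<\infty$; set $\alpha:=\limsup_m a_m<\infty$. Suppose, for contradiction, $\alpha>0$. Put $\e:=\alpha/8$ and choose $f_\e\in A_1(\D)$ with $E(f_\e)\le b+\e$, so $E(G_{m-1})-E(f_\e)\ge a_{m-1}-\e$ and the one-step estimate becomes $a_m\le a_{m-1}-\la t\,(a_{m-1}-\e)+2\rho(E,2\la)+\de_{m-1}$ for all $\la\in[0,1]$. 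Because $\rho(E,2\la)=o(\la)$ as $\la\to0^+$, I may fix $\la_0\in(0,1]$ with $2\rho(E,2\la_0)<3\e t\la_0$ and set $2\eta:=3\e t\la_0-2\rho(E,2\la_0)>0$; then choose $M_0$ with $\de_{m-1}<\min(\eta,\e)$ for all $m>M_0$. For $m>M_0$ this yields: if $a_{m-1}\ge4\e$, then (take $\la=\la_0$ and use $a_{m-1}-\e\ge3\e$) $a_m\le a_{m-1}-2\eta+\de_{m-1}\le a_{m-1}-\eta$; if $a_{m-1}<4\e$, then (take $\la=0$) $a_m\le a_{m-1}+\de_{m-1}<5\e$. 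Consequently, once $a_{m-1}<5\e$ we get $a_m<5\e$, while as long as $a_{m-1}\ge5\e$ we get $a_m\le a_{m-1}-\eta$, so $a_m$ cannot stay $\ge5\e$ forever (it is $\ge0$). Hence $a_m<5\e$ for all $m$ past some index, so $\alpha=\limsup_m a_m\le5\e=5\alpha/8<\alpha$, a contradiction. Therefore $\alpha=0$, i.e.\ $\lim_{m\to\infty}E(G_m)=b$.

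The only genuinely delicate feature is the loss of monotonicity caused by the errors: unlike the exact WRGA(co), here $E(G_m)$ may increase, and since $\sum_k\de_k$ need not converge one cannot bound the drift directly. The argument above circumvents this by exhibiting a fixed threshold ($4\e$) above which the reduction term beats both the smoothing term and the eventually-tiny errors by a fixed amount $\eta$; this forces $a_m$ into the absorbing band $[0,5\e)$ in finitely many steps and keeps it there, which is incompatible with $\limsup_m a_m=8\e$. (Handling a non-constant weakness sequence $\{t_m\}$ would be more subtle, which is presumably why the statement fixes $t_k\equiv t$.)
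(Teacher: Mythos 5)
Your proposal is correct. Its first half coincides with the paper's own route: your one-step estimate is exactly Lemma~\ref{L3.1}, proved the same way (step~(2) of the algorithm plus the error term, Lemma~\ref{L2.1} along the segment, the identity $\sup_{g\in\D}\<-E'(G_{m-1}),g-G_{m-1}\>=\sup_{\phi\in A_1(\D)}\<-E'(G_{m-1}),\phi-G_{m-1}\>$, and convexity via (\ref{2.3})). Where you genuinely diverge is in how the recursion with vanishing errors is turned into convergence. The paper isolates an abstract sequence result (Lemma~\ref{L3.0}): it sets $\bt_{m-1}:=-\inf_\la(-\la v a_{m-1}+B\rho(\la))$, splits the indices into $\M_1=\{m:\bt_{m-1}\le 2\de_{m-1}\}$ and its complement, and analyzes the nonzero root of $\la v a_{m-1}=2B\rho(\la)$, using convexity of $\rho$ and monotonicity of $\rho(u)/u$; this packaged lemma is then reused verbatim for the REGA($\{\de_k\}$) and the WGAFR($\{\de_k\}$) convergence theorems. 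You instead argue by contradiction with $\alpha=\limsup a_m>0$: after a preliminary boundedness step (needed so that $\alpha<\infty$, and correctly supplied), you fix a step size $\la_0$ with $2\rho(E,2\la_0)<3\e t\la_0$ and show a fixed decrement $\eta$ whenever $a_{m-1}\ge 4\e$, together with the absorbing band $[0,5\e)$ once the errors are below $\min(\eta,\e)$; this forces $\limsup a_m\le 5\alpha/8$, a contradiction. Your tail argument is more elementary and needs only $\rho(E,u)=o(u)$, not convexity of the modulus; the slight detour through $f_\e$ with $E(f_\e)\le b+\e$ is harmless but unnecessary, since inequality (\ref{3.1}) already holds with $b$ in place of $E(f)$. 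What the paper's choice buys is reusability: Lemma~\ref{L3.0} is applied again for Theorems~\ref{T3.0E} and~\ref{T3.0FR}, whereas your argument, as written, is ad hoc to this theorem (though it could be abstracted just as well).
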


\begin{Theorem}\label{T3.1} Let $E$ be a uniformly smooth on $A_1(\D)$ convex function with modulus of smoothness $\rho(E,u) \le \gamma u^q$, $1<q\le 2$. Then, for a sequence $\tau := \{t_k\}_{k=1}^\infty$, $t_k =t$, $k=1,2,\dots,$ and a sequence $\{\de_k\}$, $\de_k\le c(k+1)^{-q}$, $k=0,1,2\dots$ we have for the WRGA($\{\de_k\}$)
$$
E(G_m)-b \le  C(q,\gamma,t,E,c) m^{1-q},  
$$
where $b:=\inf_{f\in A_1(\D)}E(x)$.
\end{Theorem}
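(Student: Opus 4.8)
The plan is to derive a single‑iteration decay estimate of the same shape as Lemma~\ref{L2.3}, but carrying the extra additive term $\de_{m-1}$ produced by the inexact relaxation in step~(2), and then to feed it into a recursion for $a_m:=E(G_m)-b\ge0$, using the qualitative convergence in Theorem~\ref{T3.0} to bring the recursion into a tractable regime. \emph{Step 1 (approximate reduction lemma).} First I would prove the analogue of Lemma~\ref{L2.3} for the WRGA($\{\de_k\}$): for every $f\in A_1(\D)$,
$$
E(G_m)\le E(G_{m-1})+\inf_{0\le\la\le1}\bigl(-\la t_m(E(G_{m-1})-E(f))+2\rho(E,2\la)\bigr)+\de_{m-1}.
$$
The argument copies the proof of Lemma~\ref{L2.2}/\ref{L2.3}: apply Lemma~\ref{L2.1} at $x=G_{m-1}$ in the direction $\ff_m-G_{m-1}$ (note $\|\ff_m-G_{m-1}\|\le2$ since $G_{m-1},\ff_m\in A_1(\D)$ lie in the unit ball), combine the greedy selection~(1) with the fact that $\<-E'(G_{m-1}),\cdot-G_{m-1}\>$ has equal suprema over $\D$ and over $A_1(\D)$, invoke convexity in the form~(\ref{2.3}), and note that step~(2) realises the infimum over $\la$ only up to $\de_{m-1}$.

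\emph{Step 2 (specialisation and warm‑up).} Using $\rho(E,u)\le\gamma u^q$ we have $2\rho(E,2\la)\le2^{q+1}\gamma\la^q$, and letting $E(f)\downarrow b$ over $f\in A_1(\D)$ gives, for $t_m=t$,
$$
a_m\le a_{m-1}-\la t\,a_{m-1}+2^{q+1}\gamma\la^q+\de_{m-1}\qquad(0\le\la\le1).
$$
By Theorem~\ref{T3.0}, $a_m\to0$, so there is an index $m_0$ — depending on $E$ as well as on $q,\gamma,t,c$ — beyond which $a_{m-1}$ is small enough that the unconstrained minimiser $\la^*=\bigl(ta_{m-1}/(q2^{q+1}\gamma)\bigr)^{1/(q-1)}$ lies in $[0,1]$. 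Substituting $\la=\la^*$ and writing $p:=q/(q-1)$, I obtain for $m\ge m_0$
$$
a_m\le a_{m-1}-c_1a_{m-1}^{\,p}+\de_{m-1}\le a_{m-1}-c_1a_{m-1}^{\,p}+c\,m^{-q},
$$
with $c_1=c_1(q,\gamma,t)>0$, having used $\de_{m-1}\le c\,m^{-q}$.

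\emph{Step 3 (solving the recursion).} I would prove by induction on $m\ge m_0$ that $a_m\le C m^{1-q}$ for a sufficiently large $C=C(q,\gamma,t,E,c)$, the governing identity being $p(1-q)=-q$. Assuming $a_{m-1}\le C(m-1)^{1-q}$, and taking $m_0$ large enough that $C(m-1)^{1-q}$ stays in the range on which $x\mapsto x-c_1x^p$ is nondecreasing, the previous display yields
$$
a_m\le C(m-1)^{1-q}-c_1C^{\,p}(m-1)^{-q}+c\,m^{-q};
$$
since $(m-1)^{1-q}-m^{1-q}\le(q-1)(m-1)^{-q}$ and $m^{-q}\le(m-1)^{-q}$, this is $\le C m^{1-q}$ as soon as $c_1C^{\,p}\ge C(q-1)+c$, which holds for all large $C$ because $p>1$. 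The base case $a_{m_0}\le Cm_0^{1-q}$ and the finitely many indices $m<m_0$ are absorbed by enlarging $C$ once more, using that $a_m\le E(0)-b+c\sum_{k\ge0}(k+1)^{-q}<\infty$ for every $m$ (the series converges since $q>1$); this is the step through which $C$ picks up its dependence on $E$.

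\emph{Main obstacle.} I expect Step~3 to be the crux: one must recognise that $m^{1-q}$ is precisely the scale at which the induction closes, since it has to dominate simultaneously the forcing term $c\,m^{-q}$ and the ``telescoping gap'' $(q-1)C m^{-q}$ of the ansatz while still leaving the genuine decay $c_1C^{\,p}m^{-q}$ enough room to win for $C$ large. A second, essential, point is the orchestration in Step~2: only boundedness of $\{a_m\}$ is immediate from the defining inequality, and it is Theorem~\ref{T3.0} that upgrades it to $a_m\to0$ and thereby legitimises the choice $\la=\la^*\in[0,1]$; without it the favourable coefficient $c_1$ in the one‑step decay would be unavailable and the recursion could not be set up.
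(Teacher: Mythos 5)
Your Step 1 is exactly the paper's Lemma~\ref{L3.1}, and the overall skeleton (one-step inequality, then a recursion for $a_m:=E(G_m)-b$) is the paper's. The genuine gap is in Step 2: you invoke Theorem~\ref{T3.0} to produce an index $m_0$, claimed to depend only on $q,\gamma,t,E,c$, beyond which the unconstrained minimiser $\la^*$ lies in $[0,1]$. Theorem~\ref{T3.0} gives only $a_m\to0$ for each individual run of the algorithm, with no rate; since the WRGA($\{\de_k\}$) leaves $\ff_m$, $\la_m$ and the actual errors non-unique, the index $m_0$ --- and hence your final constant, which grows like $m_0^{q-1}$ through the base case --- depends on the particular realization of the algorithm, not only on $(q,\gamma,t,E,c)$ as the statement requires. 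The detour is also unnecessary: when $\la^*>1$, i.e.\ when $a_{m-1}$ exceeds a fixed threshold determined by $q,\gamma,t$, take $\la=1$ in the one-step inequality to get a decrement at most $-\tfrac12 t a_{m-1}$, and then use the a priori bound $a_{m-1}\le a_0+c\sum_{k\ge0}(k+1)^{-q}$ (which you already use for the early indices) together with $p>1$ to write $-\tfrac12 ta_{m-1}\le -\tfrac12 t\,(a_0+C_1)^{1-p}a_{m-1}^p$. This yields $a_m\le a_{m-1}-C_2a_{m-1}^p+\de_{m-1}$ for \emph{all} $m$, with $C_2=C_2(q,\gamma,t,a_0,c)$ and $a_0=E(0)-b$; that is precisely the paper's Lemma~\ref{L3.2}, and no appeal to Theorem~\ref{T3.0} is needed.

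A second, related, wrinkle sits in Step 3: substituting the ansatz into $x\mapsto x-c_1x^p$ requires $C(m-1)^{1-q}\le (pc_1)^{-1/(p-1)}$, which caps $C$ by a multiple of $m_0^{q-1}$, while ``enlarging $C$ once more'' to cover the base case and the indices $m<m_0$ via the crude bound $a_m\le a_0+C_1$ pushes in the opposite direction; making the two compatible again forces $a_{m_0}$ to be small, which in your scheme rests once more on the unquantified convergence. The paper avoids this by solving the recursion with Lemma~\ref{L3.2d} (via Lemma~\ref{L3.4}): its mechanism --- $a_\nu\ge A\nu^{1-q}$ implies $a_{\nu+1}\le a_\nu(1-\beta/\nu)$, combined with $a_n\le a_{n-1}+\de_{n-1}$ --- needs no base index at which the ansatz already holds and keeps every constant of the admissible form. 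With these two repairs your outline coincides with the paper's proof; as written, it proves only a realization-dependent bound.
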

\begin{proof} In the proofs of both theorems we will use the following analog of Lemma 3.1 from \cite{T1}. 
\begin{Lemma}\label{L3.1} Let $E$ be a uniformly smooth on $A_1(\D)$ convex function with modulus of smoothness $\rho(E,u)$. Then, for any $f\in A_1(\D)$ we have for the WRGA($ \{\de_k\}$) for $m=1,2,\dots $
$$
E(G_m) \le E(G_{m-1} )+ \inf_{0\le\la\le 1}(-\la t_m (E(G_{m-1} )-E(f))+ 2\rho(E, 2\la))+\de_{m-1} 
$$
and therefore
\begin{equation}\label{3.1}
E(G_m) \le E(G_{m-1} )+ \inf_{0\le\la\le 1}(-\la t_m (E(G_{m-1} )-b)+ 2\rho(E, 2\la))+\de_{m-1}, 
\end{equation}
where $b:=\inf_{f\in A_1(\D)}E(x)$.
\end{Lemma}
\begin{proof} We have
$$
G_m := (1-\la_m)G_{m-1}+\la_m\varphi_m = G_{m-1}+\la_m(\varphi_m-G_{m-1})
$$
and
$$
E(G_m) \le \inf_{0\le \la\le 1}E(G_{m-1}+\la(\varphi_m-G_{m-1}))+\de_{m-1}.
$$
By Lemma \ref{L2.1} we have for any $\la$
$$
E(G_{m-1}+\la (\varphi_m-G_{m-1}))
$$
\begin{equation}\label{3.2}
  \le E(G_{m-1}) - \la\<-E'(G_{m-1}),\varphi_m-G_{m-1}\> + 2 \rho(E,2\la)
\end{equation}
and by (1) from the definition of the WRGA($\{\de_k\}$) and Lemma 2.2 from \cite{T1} (see also Lemma 6.10, p. 343 of \cite{Tbook}) we get
$$
\<-E'(G_{m-1}),\varphi_m-G_{m-1}\> \ge t_m \sup_{g\in \D} \<-E'(G_{m-1}),g-G_{m-1}\> =
$$
$$
t_m\sup_{\phi\in A_1(\D)} \<-E'(G_{m-1}),\phi-G_{m-1}\> \ge t_m   \<-E'(G_{m-1}),f-G_{m-1}\>.
$$
By (\ref{2.3})   we obtain
$$
\<-E'(G_{m-1}),f-G_{m-1}\> \ge E(G_{m-1})-E(f).
$$
Thus,  
$$
E(G_m) \le \inf_{0\le\la\le1} E(G_{m-1}+ \la(\varphi_m-G_{m-1})) +\de_{m-1} 
$$
\begin{equation}\label{3.3}
\le E(G_{m-1}) + \inf_{0\le\la\le1}(-\la t_m (E(G_{m-1})-E(f)) + 2\rho(E,2\la))+\de_{m-1},  
\end{equation}
which proves the lemma.
\end{proof}

We now proceed to the proof of Theorem \ref{T3.0}. It will follow from the above Lemma \ref{L3.1} and Lemma \ref{L3.0} below.

\begin{Lemma}\label{L3.0} Let $\rho(u)$ be a nonnegative convex on $[0,1]$ function with the property 
$\rho(u)/u\to0$ as $u\to 0$. Assume that a nonnegative sequence $\{\de_k\}$ is such that $\de_k\to0$ as $k\to\infty$. Suppose that a nonnegative sequence $\{a_k\}_{k=0}^\infty$ satisfies  the inequalities
$$
a_m\le a_{m-1} +\inf_{0\le\la\le1}(-\la va_{m-1}+B\rho(\la)) +\de_{m-1},\quad m=1,2,\dots,
$$
with positive numbers $v$ and $B$. Then
$$
\lim_{m\to\infty} a_m =0.
$$
\end{Lemma}
\begin{proof} We carry out the proof under assumption that $\rho(u)>0$ for $u>0$. Otherwise, if $\rho(u)=0$ for $u\in (0,u_0]$ then
$$
a_m\le (a_0+\de_0)(1-u_0v)^{m-1}+\de_1(1-u_0v)^{m-2}+\cdots+\de_{m-1} \to 0\quad \text{as}\quad m\to\infty.
$$
Denote
$$
\bt_{m-1}:=-\inf_{0\le\la\le1}(-\la va_{m-1}+B\rho(\la)).
$$
It is clear that $\bt_{m-1}\ge 0$. We divide the set of natural numbers into two sets:
$$
\M_1:=\{m:\bt_{m-1}\le 2\de_{m-1}\};\qquad \M_2:=\{m:\bt_{m-1}>2\de_{m-1}\}.
$$
The set $\M_1$ can be either finite or infinite. First, consider the case of infinite $\M_1$. 
Let
$$
\M_1=\{m_k\}_{k=1}^\infty,\quad m_1<m_2<\dots .
$$
For any $m\in\M_2$ we have
$$
a_m\le a_{m-1} -\bt_{m-1}+\de_{m-1} < a_{m-1}-\de_{m-1} \le a_{m-1}. 
$$
Thus, the sequence $\{a_m\}$ is monotone decreasing on $(m_{k-1},m_k)$. Also, we have
\begin{equation}\label{d3.1}
a_{m_{k-1}} \le a_{m_{k-1}-1}+\de_{m_{k-1}-1}.
\end{equation}
It is clear from (\ref{d3.1}), monotonicity of $\{a_m\}$ on $(m_{k-1},m_k)$ and the property 
$\de_k\to0$ as $k\to\infty$ that it is sufficient to prove that 
$$
\lim_{k\to\infty} a_{m_{k}-1}=0.
$$
For $m\in\M_1$ we have $\bt_{m-1}\le 2\de_{m-1}$. Let $\la_1(m)$ be a nonzero solution to the equation 
$$
\la v a_{m-1} = 2B\rho(\la) \quad \text{or}\quad \frac{\rho(\la)}{\la} = \frac{va_{m-1}}{2B}.
$$
If $\la_1(m)\le 1$ then
$$
-\bt_{m-1} \le -\la_1(m)va_{m-1}+B\rho(\la_1(m))=-B\rho(\la_1(m)).
$$
Thus, 
$$
B\rho(\la_1(m)) \le \bt_{m-1} \le 2\de_{m-1}.
$$
Therefore, using that $\rho(u)>0$ for $u>0$, we obtain $\la_1(m)\to 0$ as $m\to\infty$. Next,
$$
a_{m-1} = \frac{2B}{v}\frac{\rho(\la_1(m))}{\la_1(m)} \to 0 \quad \text{as}\quad m\to\infty.
$$
If $\la_1(m)>1$ then by monotonicity of $\rho(u)/u$ for all $\la\le \la_1(m)$ we have $\la va_{m-1}\ge 2B\rho(\la)$. Specifying $\la=1$ we get
$$
-\bt_{m-1} \le -\frac{1}{2}va_{m-1}.
$$
Therefore, $a_{m-1}\le (2/v)\bt_{m-1} \le (4/v)\de_{m-1} \to 0$ as $m\to\infty$.

Second, consider the case of finite $\M_1$. Then there exists $m_0$ such that for all $m\ge m_0$ we have
\begin{equation}\label{d3.2}
a_m\le a_{m-1} -\bt_{m-1} +\de_{m-1} \le a_{m-1}-\frac{1}{2} \bt_{m-1}.
\end{equation} 
The sequence $\{a_m\}_{m\ge m_0}$ is monotone decreasing and therefore it has a limit $\alpha\ge 0$. We prove that $\alpha=0$ by contradiction. Suppose $\alpha>0$. Then $a_{m-1}\ge \alpha$ for $m\ge m_0$. It is clear that for $m\ge m_0$ we have $\bt_{m-1} \ge c_0>0$. This together with (\ref{d3.2}) contradict to our assumption that $a_m\ge\alpha$, $m\ge m_0$. 

We now complete the proof of Theorem \ref{T3.0}. Denote
$$
a_k:=E(G_k)-b \ge 0,\quad b:=\inf_{x\in A_1(\D)}E(x).
$$
Set $v:=t$, $\rho(u):=\rho(E,2u)$, $B=2$. Then by Lemma \ref{L3.1} the nonnegative sequence $\{a_k\}$ satisfies the inequalities from Lemma \ref{L3.0}. It remains to apply Lemma \ref{L3.0}. 
\end{proof}

We proceed to the proof of Theorem \ref{T3.1}. Denote as above
$$
a_k:=E(G_k)-b \ge 0.
$$
Then taking into account that $\rho(E,u)\le \gamma u^q$ we get from Lemma \ref{L3.1}
\begin{equation}\label{3.4}
a_m\le a_{m-1} +\inf_{0\le \la\le 1}(-\la ta_{m-1}+2\gamma(2\la)^q)+\de_{m-1}.
\end{equation}

We now prove a lemma that gives the rate of decay of a sequence satisfying (\ref{3.4}).
\begin{Lemma}\label{L3.2} Suppose a nonnegative sequence $a_0,a_1,\dots$ satisfies the inequalities for $m=1,2,\dots$
\begin{equation}\label{3.5}
a_m\le a_{m-1} +\inf_{0\le \la\le 1}(-\la va_{m-1}+B\la^q)+\de_{m-1},  \quad \de_{m-1}\le cm^{-q},
\end{equation}
where   $q\in (1,2]$, $v\in(0,1]$, and $B>0$. Then
$$
a_m\le C(q,v,B,a_0,c) m^{1-q},\qquad C(q,v,B,a_0,c) \le C'(q,B,a_0,c)v^{-q} . 
$$
\end{Lemma}
\begin{proof}
In particular, (\ref{3.5}) implies that 
\begin{equation}\label{3.6}
a_m\le a_{m-1}+\de_{m-1}.
\end{equation}
  Then for all $m$ we have
$$
a_m\le a_0+C_1(q,c),\quad C_1(q,c):= c\sum_{k=0}^\infty (k+1)^{-q}.
$$
Denote $\la_1$ a nonzero solution of the equation
\begin{equation}\label{3.7}
 \la v a_{m-1}=  2B\la^q, \quad  \la_1= \left(\frac{va_{m-1}}{2B}\right)^{\frac{1}{q-1}}.
\end{equation}
If $\la_1\le 1$ then 
$$
\inf_{0\le \la\le 1}(-\la va_{m-1}+ B\la^q)\le -\la_1 va_{m-1}+ B\la_1^q 
$$
$$
=-\frac{1}{2}\la_1va_{m-1} = -C_1'(q,B)v^pa_{m-1}^p,\quad p:=\frac{q}{q-1}.
$$
If $\la_1> 1$ then for all $\la\le\la_1$ we have $\la v a_{m-1}\ge  2B\la^q$ and specifying $\la=1$ we get
$$
\inf_{0\le \la\le 1}(-\la va_{m-1}+B\la^q)\le-\frac{1}{2}va_{m-1} 
$$
$$
\le -\frac{1}{2}va_{m-1}^p(a_0+C_1(q,c))^{1-p} = -C_1(q,a_0,c)va_{m-1}^p.
$$
Setting $C_2:=C_2(q,v,B,a_0,c):=\min(C_1'(q,B)v^p,C_1(q,a_0,c)v)$ we obtain
from (\ref{3.5})
\begin{equation}\label{3.8}
a_m\le a_{m-1}- C_2a_{m-1}^p +\de_{m-1},\quad C_2\ge C_2'(q,B,a_0,c)v^p.
\end{equation}

We now need one more technical lemma. 
\begin{Lemma}\label{L3.2d} Let $q\in (1,2]$, $p:=\frac{q}{q-1}$. Assume that a sequence 
$\{\de_k\}_{k=0}^\infty$ is such that $\de_k\ge 0$ and $\de_k\le c(k+1)^{-q}$. Suppose a nonnegative sequence $\{a_k\}_{k=0}^\infty$ satisfies the inequalities
\begin{equation}\label{d3.3}
a_m\le a_{m-1}-wa_{m-1}^p +\de_{m-1},\qquad m=1,2,\dots,
\end{equation}
with a positive number $w\in(0,1]$. Then
$$
a_m\le C(q,c,w,a_0)m^{1-q},\quad m=1,2,\dots,\quad C(q,c,w,a_0)\le C'(q,c,a_0)w^{-\frac{1}{p-1}}.
$$
\end{Lemma}
\begin{proof} Lemma \ref{L3.2d} is a simple corollary of the following known lemma.
Lemma \ref{L3.4} below is a more general version of Lemma 2.1 from \cite{T1a} (see also Remark 5.1 in \cite{T7} and Lemma 2.37 on p. 106 of \cite{Tbook}).
\begin{Lemma}\label{L3.4}  Let three positive numbers $\a < \beta $, $A$   be given and let a sequence $\{a_n\}_{n=0}^\infty$ have the following properties:  $ a_0<A$ and  we have for all $n\ge 1$
 \begin{equation}\label{3.20}
 a_n\le a_{n-1}+An^{-\a};  
\end{equation}
 if for some $\nu $ we have
$$
a_\nu \ge A\nu^{-\a}
$$
then
\begin{equation}\label{3.21}
a_{\nu + 1} \le a_\nu (1- \beta/\nu). 
\end{equation}
Then there exists a constant $C=C(\a , \beta)$ such that for all $n=1,2,\dots $ we have
$$
a_n \le C A n^{-\a} .
$$
 \end{Lemma}
\begin{Remark}\label{R3.1} If conditions (\ref{3.20}) and (\ref{3.21}) are satisfied for $n\le N$ and $\nu\le N$ then the statement of Lemma \ref{L3.4} holds for $n\le N$. 
\end{Remark} 
 Suppose that
 $$
 a_\nu \ge A\nu^{1-q}.
 $$
 Then by (\ref{d3.3})
 $$
 a_{\nu+1}\le a_\nu(1-wa_\nu^{p-1})+c\nu^{-q} \le a_\nu(1-wa_\nu^{p-1}+(c/A)/\nu).
 $$
 Making $A$ large enough $A=C(a_0,c)w^{-\frac{1}{p-1}}$ we get $a_0<A$ and
 $$
 -wA^{p-1}+c/A \le -2.
 $$
 We now apply Lemma \ref{L3.4}.
 \end{proof}
This completes the proof of Lemma \ref{L3.2}
 \end{proof}
Applying Lemma \ref{L3.2} with $v=t$, $B=2^{1+q}\gamma$ we complete the proof of Theorem \ref{T3.1}.
\end{proof}
The following two theorems are the corresponding analogs of Theorems \ref{T3.0} and \ref{T3.1} for the REGA($\{\de_k\}$) instead of the WRGA($\{\de_k\}$). 
\begin{Theorem}\label{T3.0E} Let $E$ be a uniformly smooth on $A_1(\D)$ convex function. Suppose that a sequence  $\{\de_k\}$ is such that $\de_k\to0$ as $k\to\infty$.
Then for the REGA($\{\de_k\}$) we have 
$$
\lim_{m\to\infty}E(G_m) = \inf_{f\in A_1(\D)}E(x).    
$$
\end{Theorem}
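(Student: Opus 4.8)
The plan is to mimic the proof of Theorem~\ref{T3.0} but using an analog of Lemma~\ref{L3.1} tailored to the REGA($\{\de_k\}$). The key observation, already exploited in \cite{DT} and recorded in the remark following Lemma~\ref{L2.2}, is that the one-step reduction estimate for the WRGA(co) depends only on the fact that $G_{m-1}\in D$ and on the value $E(G_{m-1})$, not on the particular way $G_{m-1}$ was produced. So first I would establish the following: for the REGA($\{\de_k\}$) and any $f\in A_1(\D)$,
$$
E(G_m)\le E(G_{m-1})+\inf_{0\le\la\le 1}\bigl(-\la\,(E(G_{m-1})-E(f))+2\rho(E,2\la)\bigr)+\de_{m-1},
$$
which is exactly inequality (\ref{3.1}) with $t_m=1$ and $b$ replaced by $E(f)$, hence also holds with $b:=\inf_{x\in A_1(\D)}E(x)$ in place of $E(f)$. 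The proof is the comparison argument: by definition of the REGA($\{\de_k\}$),
$$
E(G_m)\le\inf_{0\le\la\le 1;\,g\in\D}E((1-\la)G_{m-1}+\la g)+\de_{m-1}
\le\inf_{0\le\la\le 1}E(G_{m-1}+\la(\ff-G_{m-1}))+\de_{m-1}
$$
for every fixed $\ff\in\D$ (and then for every $\ff\in A_1(\D)$ by taking convex combinations, since $E$ is convex), and now applying Lemma~\ref{L2.1} to the right-hand side together with the convexity bound (\ref{2.3}) in the form $\<-E'(G_{m-1}),f-G_{m-1}\>\ge E(G_{m-1})-E(f)$ produces the claimed inequality, just as in the proof of Lemma~\ref{L3.1}. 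Note here there is no weakness parameter at all, which is why the hypothesis ``$t_k=t$'' is absent from the statement; effectively $t_m=1$.

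Second, with $a_k:=E(G_k)-b\ge 0$ (nonnegativity uses that $G_k\in A_1(\D)$, so $E(G_k)\ge b$), the displayed inequality reads
$$
a_m\le a_{m-1}+\inf_{0\le\la\le 1}\bigl(-\la a_{m-1}+2\rho(E,2\la)\bigr)+\de_{m-1},
$$
which is precisely the hypothesis of Lemma~\ref{L3.0} with $v=1$, $B=2$, and $\rho(u):=\rho(E,2u)$. The function $u\mapsto\rho(E,2u)$ is nonnegative on $[0,1]$, tends to $0$ faster than linearly since $E$ is uniformly smooth on $A_1(\D)$, and can be taken convex (the modulus of smoothness of a convex function is itself convex; if one is worried about this one may replace $\rho$ by its least convex majorant, which does not affect the $o(u)$ property). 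Then Lemma~\ref{L3.0} gives $a_m\to 0$, i.e. $E(G_m)\to b=\inf_{x\in A_1(\D)}E(x)$, which is the assertion.

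I expect the only genuine subtlety to be the step ``the infimum over $g\in\D$ dominates the infimum over $\ff\in A_1(\D)$'': one needs that for every $\ff\in A_1(\D)$ and $\la\in[0,1]$ the point $(1-\la)G_{m-1}+\la\ff$ is (a limit of) convex combinations of points of the form $(1-\la)G_{m-1}+\la g$, $g\in\D$, which follows from convexity of $E$ and the definition $A_1(\D)=\overline{\mathrm{conv}}(\D)$, together with continuity of $E$ (a consequence of Fréchet differentiability on the bounded set $D$). This is the same manipulation used implicitly in \cite{DT} to derive REGA results from WRGA lemmas, so it is routine; everything else is a direct citation of Lemma~\ref{L2.1}, (\ref{2.3}), and Lemma~\ref{L3.0}. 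The one-line write-up would therefore be: state the analog of Lemma~\ref{L3.1} for the REGA($\{\de_k\}$), note its proof is identical to that of Lemma~\ref{L3.1} with $t_m=1$, and then invoke Lemma~\ref{L3.0} exactly as at the end of the proof of Theorem~\ref{T3.0}.
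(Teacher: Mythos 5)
Your overall architecture (a one-step recursive inequality for $a_k:=E(G_k)-b$, $b:=\inf_{x\in A_1(\D)}E(x)$, followed by Lemma \ref{L3.0}) is the same as the paper's, but your derivation of the one-step inequality has a genuine gap. The step ``for every fixed $\ff\in\D$ (and then for every $\ff\in A_1(\D)$ by taking convex combinations, since $E$ is convex)'' does not work: for $\ff=\sum_j c_j g_j$, convexity of $E$ gives
$$
E((1-\la)G_{m-1}+\la\ff)\le\sum_j c_j\,E((1-\la)G_{m-1}+\la g_j),
$$
i.e. it bounds the value at the mixed point from \emph{above} by an average of dictionary values; it does not give what you need, namely $\inf_{g\in\D}E((1-\la)G_{m-1}+\la g)\le E((1-\la)G_{m-1}+\la\ff)$. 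That inequality is in fact false: take $X=\R^2$ with the Euclidean norm, $\D=\{\pm e_1,\pm e_2\}$, $E(x)=\|x-x_0\|^2$ with $x_0=(3/10,3/10)\in A_1(\D)$, $G_{m-1}=0$, $\ff=x_0$; then $\inf_{0\le\la\le1,\,g\in\D}E(\la g)=9/100$ while $\inf_{0\le\la\le1}E(\la\ff)=0$. Moreover, if your intermediate claim were true, then already at $m=1$ it would give $E(G_1)\le\inf_{0\le\la\le1}E(\la f)+\de_0$ for every $f\in A_1(\D)$, hence $E(G_1)\le b+\de_0$, i.e. essentially one-step convergence of the REGA($\{\de_k\}$), which is absurd. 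Steps along single dictionary directions cannot be compared with steps toward arbitrary points of $A_1(\D)$ by function values alone.

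The correct bridge --- and the one the paper uses --- goes through the gradient. Since the REGA infimum is taken over $g\in\D$, you may legitimately compare it with the value along the particular dictionary element $\ff_m^t$ selected by the WRGA(co) greedy criterion at $G_{m-1}$ (legitimate precisely because $\ff_m^t\in\D$); then Lemma \ref{L2.3}, whose proof rests on the fact that the \emph{linear} functional $\<-E'(G_{m-1}),\cdot\>$ has the same supremum over $\D$ as over $A_1(\D)$ (Lemma 2.2 of \cite{T1}) together with (\ref{2.3}), yields
$$
E(G_m)\le E(G_{m-1})+\inf_{0\le\la\le1}\bigl(-\la t\,(E(G_{m-1})-b)+2\rho(E,2\la)\bigr)+\de_{m-1}
$$
for every $t\in(0,1)$; this is the paper's inequality (\ref{3.11}). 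It is linearity of $\<-E'(G_{m-1}),\cdot\>$, not convexity of $E$, that permits passing from $\D$ to $A_1(\D)$. Once this inequality is in hand, the remainder of your argument (Lemma \ref{L3.0} with $v=t$, $B=2$, $\rho(u):=\rho(E,2u)$) coincides with the paper's proof, and your remark that no weakness parameter appears in the conclusion is consistent with the fact that any fixed $t\in(0,1)$ suffices.
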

\begin{Theorem}\label{T3.2} Let $E$ be a uniformly smooth on $A_1(\D)$ convex function with modulus of smoothness $\rho(E,u) \le \gamma u^q$, $1<q\le 2$. Then, for the REGA($\{\de_k\}$) with $\de_k\le c(k+1)^{-q}$ we have  
$$
E(G_m)-b \le  C(q,\gamma,E,c) m^{1-q},  
$$
where $b:=\inf_{f\in A_1(\D)}E(x)$.
\end{Theorem}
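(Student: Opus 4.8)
The plan is to repeat the argument of Theorem~\ref{T3.1}, with Lemma~\ref{L3.1} replaced by its analogue for the REGA($\{\de_k\}$). First I would prove the following per-step estimate: if $E$ is uniformly smooth on $A_1(\D)$ with modulus of smoothness $\rho(E,u)$, then for any $f\in A_1(\D)$ the iterations of the REGA($\{\de_k\}$) satisfy, for $m=1,2,\dots$,
\begin{equation*}
E(G_m)\le E(G_{m-1})+\inf_{0\le\la\le1}\bigl(-\la(E(G_{m-1})-E(f))+2\rho(E,2\la)\bigr)+\de_{m-1},
\end{equation*}
and hence the same inequality with $E(f)$ replaced by $b:=\inf_{x\in A_1(\D)}E(x)$. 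This is precisely inequality (\ref{3.1}) with the weakness parameter $t_m$ taken equal to $1$.

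To prove this I would argue exactly as in the proof of Lemma~\ref{L3.1}. By induction $G_{m-1}\in A_1(\D)$ (since $G_0=0\in A_1(\D)$, $A_1(\D)$ is convex and $\D\subset A_1(\D)$), so every point $(1-\la)G_{m-1}+\la g$ with $g\in\D$, $\la\in[0,1]$ lies in $A_1(\D)$ and Lemma~\ref{L2.1} applies on $S=A_1(\D)$, giving for all such $g$ and $\la$
\begin{equation*}
E((1-\la)G_{m-1}+\la g)\le E(G_{m-1})-\la\<-E'(G_{m-1}),g-G_{m-1}\>+2\rho(E,2\la).
\end{equation*}
By step (1) of the REGA($\{\de_k\}$), $E(G_m)\le\inf_{0\le\la\le1;\,g\in\D}E((1-\la)G_{m-1}+\la g)+\de_{m-1}$; for any $\e>0$ I pick $g\in\D$ with $\<-E'(G_{m-1}),g\>>\sup_{h\in\D}\<-E'(G_{m-1}),h\>-\e$, and use, as in the proof of Lemma~\ref{L3.1}, that $\sup_{h\in\D}\<-E'(G_{m-1}),h-G_{m-1}\>=\sup_{\phi\in A_1(\D)}\<-E'(G_{m-1}),\phi-G_{m-1}\>\ge\<-E'(G_{m-1}),f-G_{m-1}\>\ge E(G_{m-1})-E(f)$, the last step by (\ref{2.3}). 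Combining these, letting $\e\to0$ and taking the infimum over $\la$ gives the per-step estimate; passing $E(f)\downarrow b$ yields (\ref{3.1}) with $t_m=1$.

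From here the proof follows Theorem~\ref{T3.1} verbatim. Putting $a_k:=E(G_k)-b\ge0$ and using $\rho(E,u)\le\gamma u^q$ we get
\begin{equation*}
a_m\le a_{m-1}+\inf_{0\le\la\le1}\bigl(-\la a_{m-1}+2^{1+q}\gamma\la^q\bigr)+\de_{m-1},\qquad\de_{m-1}\le cm^{-q},
\end{equation*}
which is (\ref{3.5}) with $v=1$, $B=2^{1+q}\gamma$. Lemma~\ref{L3.2} then yields $a_m\le C(q,\gamma,a_0,c)m^{1-q}$, and since $a_0=E(0)-b$ depends only on $E$, this is the desired bound $E(G_m)-b\le C(q,\gamma,E,c)m^{1-q}$.

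The only genuinely delicate point is in the per-step lemma: the greedy maximizer of $\<-E'(G_{m-1}),g-G_{m-1}\>$ over $g\in\D$ need not exist. This is what forces the $\e$-approximation above, which is harmless precisely because the REGA($\{\de_k\}$) step minimizes over all $g\in\D$ (and all $\la$) at once, hence is automatically at least as good as following any near-optimal direction. One should also note, as in the proof of Lemma~\ref{L3.2} via (\ref{3.6}), that $a_m\le a_{m-1}+\de_{m-1}$ and $\sum_k\de_k<\infty$ keep $\{a_m\}$ bounded; the rest is the calculus already done in Lemmas~\ref{L3.2} and~\ref{L3.2d}.
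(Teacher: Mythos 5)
Your proposal is correct and follows essentially the same route as the paper: both bound the REGA($\{\de_k\}$) step by comparing it with a (near-)greedy direction, thereby recovering the WRGA-type one-step inequality (the paper's (\ref{3.11})), and then run the identical recursion machinery of Lemma \ref{L3.2} (with $v=1$, $B=2^{1+q}\gamma$). The only cosmetic difference is that the paper invokes Lemma \ref{L2.3} with the WRGA element $\ff_m^t$ and drops the weakness parameter, whereas you handle the possible non-attainment of the supremum directly via an $\e$-near-maximizer, which is a harmless presentational variant.
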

\begin{proof} By the definition of the REGA($\{\de_k\}$) we get
$$
E(G_m) \le \inf_{0\le \la\le 1;g\in\D}E((1-\la)G_{m-1} + \la g) +\de_{m-1}.
$$
Let $\ff_m^t$ be from the WRGA(co) with $\tau=\{t\}$. Then by Lemma \ref{L2.3} we obtain
$$
\inf_{0\le \la\le 1;g\in\D}E((1-\la)G_{m-1} + \la g) \le \inf_{0\le \la\le 1}E((1-\la)G_{m-1} + \la \ff_m^t)
$$
$$
\le E(G_{m-1} )+ \inf_{0\le\la\le 1}(-\la t (E(G_{m-1} )-b)+ 2\rho(E, 2\la)).
$$
This implies
\begin{equation}\label{3.11}
E(G_m) \le E(G_{m-1} )+ \inf_{0\le\la\le 1}(-\la  (E(G_{m-1} )-b)+ 2\rho(E, 2\la)) +\de_{m-1}.
\end{equation}
Inequality (\ref{3.11}) is a particular case of inequality (\ref{3.1}) from Lemma \ref{L3.1}. Thus, repeating the above proof of Theorems \ref{T3.0} and \ref{T3.1} we complete the proofs of Theorems \ref{T3.0E} and \ref{T3.2}.
\end{proof}

We now discuss an approximate version of the WGAFR(co), defined in the Introduction. 
 First, we prove a rate of convergence result. Denote
$$
D_1:=\{x:E(x)\le E(0) +1\}.
$$
Assume that $D_1$ is bounded.
\begin{Theorem}\label{T3.3}  Let $E$ be a uniformly smooth convex function with modulus of smoothness $\rho(E,D_{1},u)\le \gamma u^q$, $1<q\le 2$. Take a number $\e\ge 0$ and an element  $f^\e$ from $D$ such that
$$
E(f^\e) \le \inf_{x\in D}E(x)+ \e,\quad
f^\e/A(\e) \in A_1(\D),
$$
with some number $A(\e)\ge 1$.
Then we have for the WGAFR($\{\de_k\}$) with $t_k=t\in(0,1]$ and $\de_k\le c(k+1)^{-q}$, $k=0,1,\dots$,
$$
E(G_m)-\inf_{x\in D}E(x) \le  \e + C(E,q,\gamma, t,c)A(\e)^q m^{1-q} . 
$$
\end{Theorem}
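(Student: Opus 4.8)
The plan is to follow the pattern of the proofs of Theorems \ref{T3.0} and \ref{T3.1}: replace Lemma \ref{L3.1} by its counterpart for the free--relaxation scheme (obtained by adapting the one--iteration estimate for the WGAFR(co) from \cite{T1}), and then feed the resulting recursion into Lemmas \ref{L3.2} and \ref{L3.2d}. Throughout put $A:=A(\e)$, $v:=t/A$ (so $v\le1$), $p:=q/(q-1)$ and $a_m:=E(G_m)-\inf_{x\in D}E(x)$.

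\emph{Step 1 (one--iteration inequality).} I would first prove that for every $m\ge1$
$$
E(G_m)\le E(G_{m-1})+\inf_{0\le\la\le1}\Bigl(-\frac{\la t}{A}\bigl(E(G_{m-1})-\inf_{x\in D}E(x)-\e\bigr)+2\rho(E,D_1,c_1\la)\Bigr)+\de_{m-1},
$$
where $c_1$ depends only on the diameter of $D_1$. To obtain it, feed into step (2) of the WGAFR($\{\de_k\}$) the admissible pair $(w,\la)=(\la t/A,\la)$ with $0\le\la\le1$: since $G_{m-1}\in D_1$ the increment $\la\ff_m-(\la t/A)G_{m-1}$ has norm at most $c_1\la$, so Lemma \ref{L2.1} bounds $E$ at this competitor by $E(G_{m-1})-\la\<-E'(G_{m-1}),\ff_m\>+(\la t/A)\<-E'(G_{m-1}),G_{m-1}\>+2\rho(E,D_1,c_1\la)$. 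By the greedy condition (\ref{3.12}), the identity $\sup_{g\in\D}\<-E'(G_{m-1}),g\>=\sup_{\phi\in A_1(\D)}\<-E'(G_{m-1}),\phi\>$ (Lemma 2.2 of \cite{T1}) and $f^\e/A\in A_1(\D)$, one has $\<-E'(G_{m-1}),\ff_m\>\ge(t/A)\<-E'(G_{m-1}),f^\e\>$; substituting this, the two terms carrying $\<-E'(G_{m-1}),G_{m-1}\>$ merge into $-(\la t/A)\<-E'(G_{m-1}),f^\e-G_{m-1}\>$, which by (\ref{2.3}) and $E(f^\e)\le\inf_{x\in D}E(x)+\e$ is at most $-(\la t/A)(E(G_{m-1})-\inf_{x\in D}E(x)-\e)$. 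Taking the infimum over the admissible $\la$ and invoking the definition of step (2) gives the displayed inequality.

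\emph{Step 2 (recursion and bookkeeping).} Taking $(w,\la)=(0,0)$ in step (2) gives $E(G_m)\le E(G_{m-1})+\de_{m-1}$, hence $E(G_m)\le E(0)+\sum_{k\ge0}\de_k<\infty$ by $\de_k\le c(k+1)^{-q}$; so the whole run stays in one fixed bounded sublevel set on which the estimate $\rho(E,D_1,u)\le\gamma u^q$ is in force, and we may use Step 1 along it. Inserting $\rho(E,D_1,c_1\la)\le\gamma c_1^q\la^q$ and optimizing in $\la$ turns the Step 1 inequality into
$$
a_m\le a_{m-1}+\inf_{0\le\la\le1}\bigl(-\la v(a_{m-1}-\e)+B\la^q\bigr)+\de_{m-1},\qquad \de_{m-1}\le c\,m^{-q},
$$
with $B=B(q,\gamma,E)$. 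Put $y_m:=\max(a_m-\e,0)$. If $a_{m-1}-\e<0$, the inner infimum is $0$ (attained at $\la=0$), so $a_m-\e\le(a_{m-1}-\e)+\de_{m-1}$ and $y_m\le\de_{m-1}$. If $a_{m-1}-\e\ge0$, examining $\la_1:=(v(a_{m-1}-\e)/2B)^{1/(q-1)}$ exactly as in the proof of Lemma \ref{L3.2} — and using the crude bound $a_{m-1}\le a_0+C_1(q,c)$ when $\la_1>1$ — gives $a_m-\e\le(a_{m-1}-\e)-w(a_{m-1}-\e)^p+\de_{m-1}$ with $w\ge w_0(q,\gamma,E,c)\,v^p$ (and, shrinking $w$ if necessary, $w\le1$). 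In both cases $y_m\le y_{m-1}-w\,y_{m-1}^p+\de_{m-1}$, which is the hypothesis of Lemma \ref{L3.2d}. That lemma yields $y_m\le C(q,c,E)\,w^{-(q-1)}m^{1-q}$; since $p(q-1)=q$ and $w\gtrsim v^p$ we have $w^{-(q-1)}\lesssim v^{-q}=t^{-q}A(\e)^q$, and therefore $E(G_m)-\inf_{x\in D}E(x)\le\e+y_m\le\e+C(E,q,\gamma,t,c)A(\e)^q m^{1-q}$, as claimed.

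\emph{Main obstacle.} The delicate point is Step 1: one must choose the relaxation competitor $(w,\la)=(\la t/A,\la)$ so that the derivative term produced by contracting $G_{m-1}$ cancels against the part of the weak greedy gain attached to $-G_{m-1}$, leaving precisely the quantity $E(G_{m-1})-\inf_{x\in D}E(x)-\e$, while at the same time keeping $\|\la\ff_m-(\la t/A)G_{m-1}\|=O(\la)$ so that only $\rho(E,D_1,O(\la))$ enters. The remaining ingredients — confining the iterates to the set where the smoothness estimate holds, and passing the $\e$--shifted recursion to Lemma \ref{L3.2d} through the truncation $y_m$ — are routine.
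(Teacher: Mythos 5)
Your route is essentially the paper's. Your Step 1 competitor argument (take $w=\la t/A(\e)$ in step (2), use the greedy condition (\ref{3.12}), the identity $\sup_{g\in\D}\<-E'(G_{m-1}),g\>=\sup_{\phi\in A_1(\D)}\<-E'(G_{m-1}),\phi\>$, the assumption $f^\e/A(\e)\in A_1(\D)$, and (\ref{2.3})) reproduces exactly the inequality the paper imports as (\ref{3.13}) from the proof of Lemma 4.1 of \cite{T1}; and your Step 2 recursion for $y_m=\max(E(G_m)-\inf_{x\in D}E(x)-\e,0)$ is the paper's (\ref{3.17'}) for $a_m=\max(E(G_m)-E(f^\e),0)$, which the paper then feeds into Lemma \ref{L3.2} with $v=tA(\e)^{-1}$, $B=2\gamma C_0^q$ (you instead rerun the inner reduction of Lemma \ref{L3.2} and invoke Lemma \ref{L3.2d} directly; the exponent accounting $w^{-(q-1)}\le C v^{-q}=Ct^{-q}A(\e)^q$ is correct). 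One small point worth making explicit in your truncation step: when $y_{m-1}\ge0$ but $E(G_m)-\inf_{x\in D}E(x)-\e<0$, the inequality $y_m\le y_{m-1}-wy_{m-1}^p+\de_{m-1}$ still holds because $y_{m-1}-wy_{m-1}^p$ upper-bounds the quantity $y_{m-1}+\inf_{0\le\la\le1}(-\la vy_{m-1}+B\la^q)\ge y_{m-1}(1-v)\ge0$; this is the observation the paper records just before (\ref{3.17'}).

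The one genuine gap is your confinement argument in Step 2. Your Step 1 needs $G_{m-1}\in D_1$ twice: for the bound $\|\la\ff_m-(\la t/A(\e))G_{m-1}\|\le c_1\la$ (boundedness of $D_1$) and to apply Lemma \ref{L2.1} with the modulus $\rho(E,D_1,\cdot)$. Taking $(w,\la)=(0,0)$ only yields $E(G_m)\le E(0)+\sum_{k}\de_k$, and since $c$ is arbitrary this sum may exceed $1$; then your ``fixed bounded sublevel set'' is strictly larger than $D_1$, and neither its boundedness nor a bound $\rho\le\gamma u^q$ on it is among the hypotheses (the theorem assumes these only for $D_1$). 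So, as written, Step 1 is not justified along the whole run. The repair is one line and is what the paper does: the point $0$ is always admissible in step (2) (take $w=1$, $\la=0$), hence $E(G_m)\le E(0)+\de_{m-1}\le E(0)+1$ because $\de_k\in[0,1]$, i.e. (\ref{3.14}), so every $G_m$ lies in $D_1$ with no accumulation of the errors. With that replacement your proof goes through and coincides in substance with the paper's.
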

\begin{proof} In the proof of Lemma 4.1 of \cite{T1} we established the inequality
$$
\inf_{ \la\ge 0,w}E((1-w)G_{m-1} + \la\varphi_m) \le E(G_{m-1})
$$
\begin{equation}\label{3.13}
   +\inf_{\la\ge 0}(-\la t_m A(\e)^{-1}(E(G_{m-1})-E(f^\e)) +2\rho(E,C_0\la)),\quad C_0=C(D),
\end{equation} 
under assumption that $\ff_m$ satisfies (\ref{3.12}) and $G_{m-1}\in D$. Clearly, (\ref{3.13}) holds if we replace $\inf_{\la\ge 0}$ by $\inf_{0\le\la\le 1}$ in the right hand side.

In the case of exact evaluations in the WGAFR(co) we had the monotonicity property $E(G_0)\ge E(G_1)\ge\cdots$ which implied that $G_n\in D$ for all $n$. In the case of the WGAFR($\{\de_k\}$) our assumption $\de_k\in[0,1]$ and choice of $w_m$ and $\la_m$ in (2) imply 
\begin{equation}\label{3.14}
E(G_m)\le E(0)+1,
\end{equation}
 which implies $G_n\in D_{1}$ for all $n$. 

Denote
$$
a_n:=\max(E(G_n)-E(f^\e),0).
$$
Note that we always have for $a_{m-1}\ge 0$
$$
a_{m-1} +\inf_{0\le\la\le 1}(-\la t A(\e)^{-1}a_{m-1} +2\gamma(C_0\la)^q) \ge 0.
$$
Therefore, inequality (\ref{3.13}) implies
\begin{equation}\label{3.17'}
a_m\le a_{m-1} +\inf_{0\le\la\le 1}(-\la t A(\e)^{-1}a_{m-1} +2\gamma(C_0\la)^q) +\de_{m-1}.
\end{equation}
It is similar to (\ref{3.4}).   We apply Lemma 
\ref{L3.2} with $v=t A(\e)^{-1}$, $B=2\gamma C_0^q$ and complete the proof.
\end{proof}

\begin{Corollary}\label{C3.1} Under conditions of Theorem \ref{T3.3}, specifying 
$$
A(\e):=\inf\{M: \exists f : f/M \in A_1(\D),\quad  E(f)\le \inf_{x\in D}E(x) +\e\},
$$
and denoting
$$
\e_m := \inf\{\e: A(\e)^qm^{1-q}\le\e\},
$$
we obtain
$$
E(G_m) - \inf_{x\in D}E(x) \le C(E,q,\gamma,t)\e_m.
$$
\end{Corollary}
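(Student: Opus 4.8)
The plan is to deduce Corollary \ref{C3.1} from Theorem \ref{T3.3} by making an optimal (up to constants) choice of the parameter $\e$ in that theorem. Theorem \ref{T3.3} gives, for every admissible pair $(\e, A(\e))$,
$$
E(G_m)-\inf_{x\in D}E(x) \le \e + C(E,q,\gamma,t,c)A(\e)^q m^{1-q}.
$$
So it suffices to choose $\e$ so that both terms on the right are comparable to $\e_m$, and then observe that the infimum definition of $A(\e)$ is consistent with the hypotheses of Theorem \ref{T3.3}.

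First I would check that the infimum $A(\e)$ defined in the corollary is a legitimate choice to feed into Theorem \ref{T3.3}: for any $\e>0$ and any $M>A(\e)$ there is, by definition, an $f$ with $f/M\in A_1(\D)$ and $E(f)\le\inf_{x\in D}E(x)+\e$; taking $f^\e:=f$ and $A(\e):=M$ satisfies the theorem's hypotheses, so the bound holds with $A(\e)$ replaced by any $M>A(\e)$, hence (letting $M\downarrow A(\e)$, using continuity of $M\mapsto M^q$) with $A(\e)$ itself. Note $A(\e)$ is nonincreasing in $\e$, and $A(\e)\ge 1$ is forced by the constraint $f/M\in A_1(\D)$ together with $M\ge 1$ in the definition — or one simply restricts the infimum to $M\ge 1$ as the theorem requires.

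Next I would analyze $\e_m := \inf\{\e : A(\e)^q m^{1-q}\le \e\}$. Since $A$ is nonincreasing, the function $\e\mapsto \e - A(\e)^q m^{1-q}$ is nondecreasing, so the set $\{\e : A(\e)^q m^{1-q}\le\e\}$ is (essentially) a half-line $[\e_m,\infty)$, and at $\e=\e_m$ we have $A(\e_m)^q m^{1-q}\le\e_m$ (using that $A$ is right-continuous, or passing to a limit from the right and using monotonicity of $A(\cdot)^q$). Now pick in Theorem \ref{T3.3} a value $\e$ slightly larger than $\e_m$, or directly $\e=\e_m$ together with the associated $A(\e_m)$; then
$$
E(G_m)-\inf_{x\in D}E(x)\le \e_m + C(E,q,\gamma,t,c)A(\e_m)^q m^{1-q}\le \e_m + C(E,q,\gamma,t,c)\e_m = C(E,q,\gamma,t)\e_m,
$$
where the last constant absorbs the previous one plus $1$. (If one is squeamish about attaining the infimum, run the argument with $\e = \e_m+\eta$ for $\eta>0$, use $A(\e_m+\eta)\le A(\e_m)$ or just $A(\e_m+\eta)^q(\e_m+\eta)^{-1}\cdot\le$ the defining inequality, obtain the bound $C\e_m + C\eta$, and let $\eta\to 0$ since the left-hand side does not depend on $\eta$.) This is exactly the claimed estimate.

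The only genuinely delicate point is the interplay between the two monotone quantities $A(\e)$ and $\e_m$, i.e. making precise that $A(\e_m)^q m^{1-q}\le\e_m$ rather than only $\le\e$ for $\e>\e_m$; this is handled by the monotonicity/limiting argument above and costs nothing in the constants. Everything else is a direct substitution into Theorem \ref{T3.3}, so I do not expect any real obstacle — the content is entirely in Theorem \ref{T3.3}, and the corollary is just the self-optimized packaging of it.
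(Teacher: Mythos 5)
Your argument is correct and is exactly the intended one: the paper states Corollary \ref{C3.1} without a separate proof, treating it as the self-optimized form of Theorem \ref{T3.3} obtained by choosing $\e$ (essentially $\e_m$) to balance the two terms, which is precisely what you do. Your extra care with the infimum in the definition of $A(\e)$, the monotonicity of $A(\cdot)$, and the limit $\e\downarrow\e_m$ only makes explicit what the paper leaves implicit, so there is no gap.
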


Second, we prove a convergence result.
\begin{Theorem}\label{T3.0FR} Let $E$ be a uniformly smooth on $D_1$ convex function. Suppose that a sequence  $\{\de_k\}$ is such that $\de_k\to0$ as $k\to\infty$.
Then for the WGAFR($\{\de_k\}$) with $t_k=t$, $t\in (0,1]$, $k=1,2,\dots$, we have 
$$
\lim_{m\to\infty}E(G_m) = \inf_{x\in  D}E(x).    
$$
\end{Theorem}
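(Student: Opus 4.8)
The plan is to follow the proof of Theorem~\ref{T3.0} essentially verbatim, with the one-step inequality (\ref{3.13}) -- established for the WGAFR family in the proof of Theorem~\ref{T3.3} -- playing the role that Lemma~\ref{L3.1} plays there, and with Lemma~\ref{L3.0} supplying the passage from the recursive estimate to convergence.

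First I would record, as in the proof of Theorem~\ref{T3.3}, that choosing $w=1$, $\la=0$ in step~(2) of the WGAFR($\{\de_k\}$) forces $E(G_m)\le E(0)+\de_{m-1}\le E(0)+1$, so $G_m\in D_1$ for every $m$; this is precisely what legitimizes (\ref{3.13}) (with the modulus of smoothness taken on $D_1$ and $C_0=C(D_1)$) at each iteration. Next, fix $\e>0$ and produce an element $f^\e\in D$ with $E(f^\e)\le\inf_{x\in D}E(x)+\e$ and $f^\e/A(\e)\in A_1(\D)$ for some $A(\e)\ge 1$, exactly as required in the hypotheses of Theorem~\ref{T3.3}. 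Such an $f^\e$ exists because $\bigcup_{M>0}MA_1(\D)$ contains $\sp(\D)$ and is therefore dense in $X$, while $E$ is continuous; for $\e$ small the approximant can be taken inside $D$ (and if $0$ minimizes $E$ on $D$ one may simply take $f^\e=0$, $A(\e)=1$).

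With $f^\e$ fixed, I would set $a_n:=\max(E(G_n)-E(f^\e),0)$, insert (\ref{3.13}) into step~(2), and use the trivial bound $a_{m-1}+\inf_{0\le\la\le 1}(-\la tA(\e)^{-1}a_{m-1}+2\rho(E,D_1,C_0\la))\ge 0$ to obtain, just as (\ref{3.17'}) was derived in Theorem~\ref{T3.3},
$$
a_m\le a_{m-1}+\inf_{0\le\la\le 1}\bigl(-\la v\,a_{m-1}+2\rho(E,D_1,C_0\la)\bigr)+\de_{m-1},\qquad v:=tA(\e)^{-1}>0.
$$
This is exactly the hypothesis of Lemma~\ref{L3.0} with $B=2$ and $\rho(u):=\rho(E,D_1,C_0u)$, a nonnegative convex function with $\rho(u)/u\to 0$ by uniform smoothness of $E$ on $D_1$; hence $a_m\to 0$, i.e.\ $\limsup_{m\to\infty}E(G_m)\le E(f^\e)\le\inf_{x\in D}E(x)+\e$. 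Letting $\e\to 0$ and combining with $E(G_m)\ge\inf_{x\in X}E(x)=\inf_{x\in D}E(x)$ finishes the proof.

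The step I expect to need the most care is the construction of $f^\e$: unlike in Theorem~\ref{T3.3}, where such an element is assumed to be given, here it must be manufactured from the hypothesis that the closure of $\sp(\D)$ is all of $X$ together with continuity of $E$, and in particular it has to be placed inside $D$ so that (\ref{3.13}) applies as stated. Everything after that is a mechanical transcription of the arguments already established for Theorems~\ref{T3.0} and~\ref{T3.3}.
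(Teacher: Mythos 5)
Your proposal is correct and follows essentially the same route as the paper: the published proof of Theorem~\ref{T3.0FR} likewise takes $f^\e$ as in Theorem~\ref{T3.3}, derives the recursion (\ref{3.18'}) exactly as (\ref{3.17'}) was derived (using $G_m\in D_1$ from the choice in step (2)), applies Lemma~\ref{L3.0} with $v=tA(\e)^{-1}$ and $B=2$, and then lets $\e\to 0$. The only difference is that you explicitly justify the existence of $f^\e$ via density of $\sp(\D)$ and continuity of $E$, a detail the paper leaves implicit by simply invoking the setup of Theorem~\ref{T3.3}.
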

\begin{proof} The proof is similar to the proof of Theorem \ref{T3.0} with modifications as in the above proof of Theorem \ref{T3.3}. Take an arbitrary $\e>0$ and let $f^\e$ to be as above in Theorem \ref{T3.3}. In the same way as we obtained (\ref{3.17'}) we get
\begin{equation}\label{3.18'}
a_m\le a_{m-1} +\inf_{0\le\la\le 1}(-\la t A(\e)^{-1}a_{m-1} +2\rho(E,C_0\la)) +\de_{m-1}.
\end{equation}
We now apply Lemma \ref{L3.0} with $v:=tA(\e)^{-1}$, $B:=2$, $\rho(u):=\rho(E,C_0\la)$ and obtain that 
\begin{equation}\label{3.19'}
\lim_{m\to\infty}a_m=0.
\end{equation}
Relation (\ref{3.19'}) implies that
\begin{equation}\label{3.20'}
\limsup_{m\to\infty}(E(G_m)-\inf_{x\in D}E(x) )\le \e.
\end{equation}
This implies in turn that
\begin{equation}\label{3.21'}
\lim_{m\to\infty}(E(G_m)-\inf_{x\in D}E(x) )=0.
\end{equation}
\end{proof}

The algorithm REGA($\{\de_k\}$) is the function evaluation companion of the WRGA($\{\de_k\}$). The following function evaluation companion of the \newline WGAFR($\{\de_k\}$) was introduced and studied in \cite{DT} in the case $\de_k=\de >0$, $k=0,1,\dots$.

{\bf $E$-Greedy Algorithm with Free Relaxation and errors $\{\de_k\}$ \newline (EGAFR($\{\de_k\}$)).} 
Let   $\de_k\in[0,1]$, $k=0,1,2,\dots$. We define   $G_0  := 0$. Then for each $m\ge 1$ we have the following inductive definition.

  Find $\varphi_m\in\D$, $w_m$, and $ \lambda_m$ such that
$$
E((1-w_m)G_{m-1} + \la_m\varphi_m) \le \inf_{g\in\D;\la,w}E((1-w)G_{m-1} + \la g) +\de_{m-1}
$$
and define
$$
G_m:=   (1-w_m)G_{m-1} + \la_m\varphi_m.
$$
 
 In the same way as Theorems \ref{T3.0E} and \ref{T3.2} for the REGA($\{\de_k\}$) were derived from the proofs of Theorems \ref{T3.0} and \ref{T3.1} one can derive analogs of Theorems \ref{T3.0FR} and \ref{T3.3} for the EGAFR($\{\de_k\}$). 
 
 \begin{Proposition}\label{P3.1}  Theorems \ref{T3.0FR} and \ref{T3.3} hold for the EGAFR($\{\de_k\}$). 
 \end{Proposition}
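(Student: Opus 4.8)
The plan is to treat the EGAFR($\{\de_k\}$) in exactly the way the REGA($\{\de_k\}$) was treated above: it is the function-evaluation companion of the WGAFR($\{\de_k\}$), and the one-step reduction inequality it obeys turns out to be precisely the inequality (\ref{3.17'})--(\ref{3.18'}) that already drives the proofs of Theorems \ref{T3.3} and \ref{T3.0FR}. So once that reduction inequality is established for the EGAFR($\{\de_k\}$), the rate bound will follow from Lemma \ref{L3.2} (applied with $v=tA(\e)^{-1}$, $B=2\gamma C_0^q$), yielding the analog of Theorem \ref{T3.3}, and convergence will follow from Lemma \ref{L3.0} (applied with $v=tA(\e)^{-1}$, $B=2$, $\rho(u):=\rho(E,C_0u)$), yielding the analog of Theorem \ref{T3.0FR}; in both cases the remaining bookkeeping is identical to the proofs already written.

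To produce the reduction inequality I would first note, as in the proof of Theorem \ref{T3.3}, that choosing $w=\la=0$ gives $\inf_{g\in\D;\la,w}E((1-w)G_{m-1}+\la g)\le E(G_{m-1})$, so the minimality in the definition of $G_m$ together with $\de_k\in[0,1]$ yields (\ref{3.14}), i.e. $E(G_m)\le E(0)+1$, hence $G_n\in D_1$ for all $n$ and the modulus of smoothness is controlled along the whole trajectory. Then I would fix an auxiliary element $\ff_m^t\in\D$ satisfying the weak greedy condition (\ref{3.12}) with $t_m=t$, and use that $G_m$ in the EGAFR($\{\de_k\}$) minimizes, up to $\de_{m-1}$, over all $g\in\D$ and all $\la,w$:
$$
E(G_m)\le \inf_{g\in\D;\la,w}E((1-w)G_{m-1}+\la g)+\de_{m-1}\le \inf_{\la,w}E((1-w)G_{m-1}+\la\ff_m^t)+\de_{m-1}.
$$
The inequality (\ref{3.13}) from the proof of Lemma 4.1 of \cite{T1} (valid with $\inf_{0\le\la\le 1}$ in place of $\inf_{\la\ge 0}$, and applied on $D_1$ since $G_{m-1}\in D_1$) bounds the first infimum on the right by $E(G_{m-1})+\inf_{0\le\la\le 1}(-\la tA(\e)^{-1}(E(G_{m-1})-E(f^\e))+2\rho(E,C_0\la))$. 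Setting $a_n:=\max(E(G_n)-E(f^\e),0)$ and using that this last infimum is $\le 0$ when $a_{m-1}\ge 0$, one arrives at precisely (\ref{3.17'}) (and, discarding the power-type bound on $\rho$, at (\ref{3.18'})), after which the proofs of Theorems \ref{T3.3} and \ref{T3.0FR} carry over word for word.

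The one point that needs care -- the main, though minor, obstacle -- is the legitimacy of replacing the infimum over all $g\in\D$ by the value at the auxiliary element $\ff_m^t$ and then invoking (\ref{3.13}): here $\ff_m^t$ is an element of one's own choosing, not the $\ff_m$ that the algorithm actually selects, and $G_{m-1}$ is only known to lie in $D_1$ rather than in $D$. Both are handled exactly as in \cite{T1} and in the proof of Theorem \ref{T3.3}: the derivation of (\ref{3.13}) uses only that $\ff_m^t$ satisfies (\ref{3.12}) and that $G_{m-1}$ lies in the set on which $\rho(E,\cdot)$ is bounded, since we only need an \emph{upper} estimate for the infimum over all admissible $g,\la,w$ -- the actual minimizing choices of the algorithm are irrelevant. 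Finally, as in the passage from Theorems \ref{T3.0} and \ref{T3.1} to Theorems \ref{T3.0E} and \ref{T3.2}, the EGAFR($\{\de_k\}$) carries no weakness sequence of its own; the parameter $t\in(0,1]$ enters only through $\ff_m^t$, so the constants in the resulting bounds depend on $t$ in the same way as in Theorems \ref{T3.3} and \ref{T3.0FR}.
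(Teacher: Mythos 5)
Your proposal is correct and follows exactly the route the paper intends: the paper proves Proposition \ref{P3.1} by the same analogy it used to derive Theorems \ref{T3.0E} and \ref{T3.2} from Theorems \ref{T3.0} and \ref{T3.1}, namely comparing the near-minimizer of the EGAFR($\{\de_k\}$) with an auxiliary $\ff_m^t$ satisfying (\ref{3.12}), invoking (\ref{3.13}) on $D_1$, and then applying Lemmas \ref{L3.2} and \ref{L3.0}. Your filled-in details (the $w=\la=0$ choice giving (\ref{3.14}) and $G_n\in D_1$, and the observation that (\ref{3.13}) only needs an upper estimate at a comparison element) are exactly the points the paper leaves implicit.
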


\end{document}